\documentclass{arxiv}


\usepackage{wrapfig}
\usepackage{times}
\usepackage{algorithm}
\usepackage{algorithmic}
\DeclareMathOperator*{\argmax}{argmax}
\DeclareMathOperator*{\argmin}{argmin}
\newcommand{\norm}[1]{\left\lVert#1\right\rVert}
\newcommand{\abs}[1]{\left\lvert#1\right\rvert}
\newcommand{\ts}{\textsuperscript}

\title[Inducing Local Lipschitzness for Robust GAIL]{On the Benefits of Inducing Local Lipschitzness for Robust Generative Adversarial Imitation Learning}



\author{%
 \Name{Farzan Memarian} \Email{farzan.memarian@utexas.edu}\\
 \addr NVIDIA Corporation
 \AND
 \Name{Abolfazl Hashemi} \Email{abolfazl@purdue.edu}\\
 \addr Purdue University%
  \AND
 \Name{Scott Niekum} \Email{sniekum@cs.umass.edu}\\
 \addr The University of Massachusetts Amherst %
  \AND
 \Name{Ufuk Topcu} \Email{utopcu@utexas.edu}\\
 \addr The University of Texas at Austin%
}

\begin{document}

\maketitle

\begin{abstract}
We explore methodologies to improve the robustness of generative adversarial imitation learning (GAIL) algorithms to observation noise. 
Towards this objective, we study the effect of local Lipschitzness of the discriminator and the generator on the robustness of policies learned by GAIL.
In many robotics applications, the learned policies by GAIL typically suffer from a degraded performance at test time since the observations from the environment might be corrupted by noise.  Hence, robustifying the learned policies against the observation noise is of critical importance. To this end, we propose a regularization method to induce local Lipschitzness in the generator and the discriminator of adversarial imitation learning methods. We show that the modified objective leads to learning significantly more robust policies. 
Moreover, we demonstrate --- both theoretically and experimentally --- that training a locally Lipschitz discriminator leads to a locally Lipschitz generator, thereby improving the robustness of the resultant policy.
We perform extensive experiments on simulated robot locomotion environments from the MuJoCo suite that demonstrate the proposed method learns policies that significantly outperform the state-of-the-art generative adversarial imitation learning algorithm when applied to test scenarios with noise-corrupted observations.

\end{abstract}
\vspace{-4mm}
\section{Introduction}
\label{sec:introduction}
Imitation learning enables the agents to learn directly from demonstrations and removes the burden of designing a utility function from system designers. Adversarial imitation learning (AIL) algorithms \cite{ho2016generative, fu2017learning} are a class of imitation learning algorithms that can learn an imitation policy in large environments with high-dimensional and continuous state and action spaces.  
Generative adversarial imitation learning (GAIL) \cite{ho2016generative} is one of the frequently used AIL algorithms. 

GAIL --- following generative adversarial networks (GAN) \cite{goodfellow2014generative} --- solves a min-max optimization problem between a discriminator and a generator. 
The discriminator is a classifier whose goal is to differentiate the state-action pairs produced by the generator from the demonstrations. 
The generator is a policy whose objective is to produce trajectories with similar state-action occupancy measures to the demonstrations. 
After successful training of GAIL, the generator can be utilized as a behavior policy for the autonomous agent in the environment.

Due to the instability in training GANs, and hence GAIL, gradient penalty is a common practice to introduce stability in the training process \cite{gulrajani2017improved}. Policies learned by GAIL \cite{ho2016generative} and its gradient penalized version performs well at test time if the test-time observations of the states are accurate. 
However, in certain scenarios, such as deploying an autonomous agent in an unknown, evolving environment, the observations at test time might be corrupted by noise due to factors such as sensor failure, evolving environmental conditions, and inconsistencies between the training and test environments \cite{brunke2022safe,zhao2023state}.
Under these scenarios, as we further demonstrate, GAIL (even with gradient penalty) leads to learning policies that are sensitive to noise at test time and perform poorly if the observations are corrupted by noise. Thus, ensuring the robustness and safety of decision-making methods have been an active area of research \cite{liu2022constrained,yu2022towards}.

It has been recently demonstrated that Lipschitzness improves the robustness of deep neural networks in classification tasks \cite{zhang2019theoretically, yang2020closer}. Inspired by the success of such Lipschitzness-inducing approaches, in this paper, we study the effect of promoting local Lipschitzness in GAIL-based methods for AIR. 

Intuitively, locally Lipschitz classifiers enjoy wider and smoother classification boundaries which in turn results in less sensitivity to inconsistencies between training and test data. 
The discriminator of GAIL is itself a classifier to differentiate the trajectories produced by the generator from the demonstrations. 
Furthermore, the robustness properties of the discriminator and the generator are critical in the robustness of the policy GAIL learns by solving the min-max optimization problem between the discriminator and the generator given that the generator, which is used as the policy in GAIL, is essentially a function of the discriminator.  Consequently, we argue while at test time we only use the generator for decision-making, the discriminator's properties such as its local Lipschitzness may affect the trained generator.
This discussion motivates us to investigate the effect of Lipschitz properties of the discriminator on the robustness of the learned generator to observation noise. 

To this end, we provide mathematical insights into the impact of the discriminator's local Lipschitzness on the robustness of the imitation policy. In particular, we show that under mild assumptions, the local Lipschitzness of the reward establishes the local Lipschitzness of the optimal action-value function. 
Since the discriminator can be thought of as a surrogate reward, we show inducing local Lipschitzness in the discriminator induces Lipschitzness in the learned policy. 
Hence, given that locally Lipschitz functions are agnostic to small variations in their input, training a locally Lipschitz discriminator then results in learning a policy that is robust to observation noise at test time.

Motivated by our theoretical insights, we propose methodologies to induce local Lipschitzness in GAIL to learn a robust policy. 
We first propose a novel regularized objective to train a locally Lipschitz discriminator. 
Once training is complete, we test the learned policy in test scenarios where the observations are corrupted by noise.

While the Lipschitzness of the discriminator may encourage robustness in the generator since they are trained jointly in a min-max game, we demonstrate enforcing local Lipschitzness on the generator directly is further beneficial. 
A locally Lipschitz generator is robust to the observation noise at test time. 
To induce local Lipschitzness in the generator, we propose a regularized objective that biases the generator towards locally Lipschitz solutions. 
By tuning the hyperparameters of the regularizer, we can control the induced radius of local Lipschitzness and the Lipschitz constant. 
Our extensive experimental results show the policies obtained with the regularized generator significantly outperform those obtained through naive GAIL in test scenarios where observations are corrupted by noise.

The summary of our contributions is as follows: 1) We study the effect of local Lipschitzness of the discriminator and the generator on the robustness of the learned policy through GAIL-based methods, 2) We propose a regularized objective to train the generator and the discriminator which induces local Lipschitzness on the learned policies, and 3) We provide thorough mathematical analysis to demonstrate how the local Lipschitzness properties of the discriminator translate into local Lipschitzness properties of the generator.


\section{Background}
\label{sec:background}
\textbf{Reinforcement Learning (RL). }
A Markov decision process (MDP) is defined as a tuple $\mathcal{M} = \langle S, A,T,r , \gamma \rangle$ in which $S$ is the state space, $A$ is the action space, $T:S \times A \to \mathcal{P}(S)$ is the transition function which maps any state-action pair into a probability distribution over next states, $r(s,a):S\times A  \to \mathbb{R}$ is the reward function, and $\gamma \in (0,1)$ is the discount factor. 
A policy $\pi(a|s): S \to \mathcal{P}(A)$ is a probability distribution over actions at a state $s$. 
Given a policy $\pi$, we have the corresponding action-value function $Q^\pi(s,a)$, which is defined as:
\begin{align*}
    &Q^\pi(s,a) = r(s,a) + \mathbb{E}_{s'\sim T(s,a)} \mathbb{E}_{a' \sim \pi(a'|s')} [Q(s',a')] 
\end{align*}
Given an MDP $\mathcal{M}$, RL aims to find a policy with maximal expected discounted sum of future rewards. 
For a policy $\pi$, the discounted causal entropy is defined as \\$H(\pi) := \mathbb{E}_{(s,a) \in \rho_\pi} [-\log(\pi(a|s)/(1-\gamma)]$ in which $\rho_\pi$ is the state-action distribution induced by policy $\pi$.

\textbf{GAIL. }
Imitation learning algorithms \cite{ bain1995framework, ng2000algorithms, ziebart2008maximum, memarian2020active, brown2019extrapolating, brown2019ranking, memarian2021self} aim to learn a policy that mimics the underlying behavior of the demonstrations. 
Methods such as inverse reinforcement learning (IRL) \cite{ng2000algorithms, ziebart2008maximum, memarian2020active} do so by learning a reward function as an intermediate step. 
Solving an IRL problem involves repeatedly solving for a policy given the latest learned reward function, which makes IRL algorithms prohibitive to learn policies for large MDPs.
 GAIL was proposed \cite{ho2016generative} to overcome the above deficiency of IRL. 
Given a reward-free MDP $\mathcal{M} = \langle S, A, T , \gamma \rangle$ and an expert policy $\pi_E$, GAIL optimizes a regularized version of the IRL objective where the regularizer $\psi(c)$ is applied to the cost function $c$\footnote{The cost can be viewed as the negative of the reward.}
\begin{equation}
    \begin{aligned}
    \label{opt:cost-regularized-IRL-objective}
    \mathrm{IRL}_\psi&(\pi_E) = \argmax_{c \in \mathbb{R}^{S\times A}} \, \psi(c)& + \left(\min_{\pi \in \Pi} -H(\pi) + \mathbb{E}_\pi[c(s,a)] \right) - \mathbb{E}_{\pi_E}[c(s,a)].
    \end{aligned}
\end{equation}
Consider the following formulation for an entropy-regularized RL problem: \\$RL(c) = \argmin_\pi - H(\pi) + \mathbb{E}_\pi[c(s,a)].$
The original work proposing GAIL \cite{ho2016generative} proves that applying RL to a cost function learned through IRL is equivalent to
\begin{equation}
    \label{opt:GAIL-RL-o-IRL}
    \mathrm{RL} \left( \mathrm{IRL}_\psi(\pi_E) \right)= \argmin_{\pi} - H(\pi) + \psi^{*}(\rho_\pi - \rho_E)
\end{equation}
where $\psi^{*}$ is the convex conjugate of the regularizer $\psi$. 
By choosing a specific regularizer \cite{ho2016generative} reformulate the problem into
\begin{equation}
\begin{aligned}
    \label{opt:gail-objective}
    \argmin_\pi &\max_D ~ \mathbb{E}_{\pi}[\log(D(s,a))] &+ \mathbb{E}_{\pi_E}[\log(1-D(s,a))] - \lambda H(\pi)
    \end{aligned}
\end{equation}
where $D(s,a):S\times A \to (0,1)$ is a discriminative classifier. GAIL is typically trained with gradient penalty for stability; in that case a regularizer of the form $\frac{\kappa}{2}\|\nabla D(s,a)-1\|^2$ is added to \eqref{opt:gail-objective} where $\kappa$ is a hyper-parameter \cite{arjovsky2017wasserstein,orsini2021matters}. Because gradient penalty is essential for training stability of GAIL, we have incorporated it in all the baselines and the proposed method.
A detailed related work section is provided in the full version available at \cite{memarian2023robust}.

\vspace{-2mm}
\section{Insights on the Lipschitzness of the Discriminator and the Generator}
\label{sec:thm}

We start by providing mathematical insight on how inducing Lipschitzness in the discriminator indirectly induces Lipschitzness in the generator, and in turn the imitation policy. 

When updating the parameters of the generator, the discriminator acts as a surrogate for the reward function, i.e., the generator update amounts to updating a policy through an RL algorithm using the discriminator to obtain the reward function. 
Hence, we alternatively study conditions on a discounted MDP $\mathcal{M} = \langle S, A,T, r , \gamma \rangle $ with stochastic dynamics and an $L$-Lipschitz reward function such that the corresponding optimal Q-function, $Q^{*}(s,a)$, becomes Lipschitz.
Next, we formalize locally Lipschitz functions.
\begin{definition} [\textbf{Locally Lipschitz function}]
    \label{def:local-lip-func}
    Consider the function $f(x): M_1 \to M_2$ which is a mapping from metric space $M_1$ to metric space $M_2$. 
    Let $d_{M_1}(.,.)$ and $d_{M_2}(.,.)$ be distance metrics defined on metric spaces $M_1$ and $M_2$ respectively. 
    Let $b_{M_1, r}(x_0) := \{ x \in M_1 \, | \, d_{M_1}(x,x_0)<r \}$ be the ball of radius $r$ around point $x$ defined by the metric $d_{M_1}$.
    Function $f(x)$ is L locally Lipschitz with radius $r$, if for every $x_0 \in M_1$ we have: $\forall x \in b_{M_1, r}(x_0): d_{M_2}(f(x),f(x_0)) < L \, d_{M_1}(x,x_0)$. If  $r = \infty$, we say that the function $f$ is $L$-Lipschitz.
\end{definition}

To show the Lipschitzness of the optimal Q-function, $Q^*$, we need to show that the norm of the gradient of $Q^*$ is bounded. 
We use the subscript $t$ to refer to the $t\ts{th}$ time step. 
Let $\nabla_{s_t} Q^*(s_t, a_t)$ denote the gradient of $Q^*$ at time step $t$ with respect to the state $s_t$, i.e., 
\begin{equation}
\label{eq:jacob-q-vector}
\begin{aligned}
\nabla_{s_t} Q^*(s_t, a_t)  = [ \nabla_{s_t^i} Q^*(s_t, a_t)  ]_{i=1}^{N}.
\end{aligned}
\end{equation}
where $\nabla_{s_t^i}$ is the gradient operator with respect to the $i^{th}$ dimension of the state space and $N$ is the dimension of the state space.
Then, it holds that
\begin{equation}
\label{eq:jacob-q2}
\begin{aligned}
\nabla_{s_t^i} Q^*(s_t, a_t)  &= \nabla_{s_t^i} \sum_{k=0}^\infty \gamma^k \, \mathbb{E}^*_{s_{t+k} |s_t,a_t} \left[ r(s_{t+k}) \right] &= \sum_{k=0}^\infty \gamma^k \, \nabla_{s_t^i}  \mathbb{E}^*_{s_{t+k} |s_t,a_t} \left[ r(s_{t+k}) \right],
\end{aligned}
\end{equation}
where $\mathbb{E}^{*}_{s_{t+k} |s_t,a_t}[.]$ denotes the expectation of its argument with respect to the conditional distribution of $s_{t+k}$ given that the agent starts from $s_t,a_t$ and follows the optimal policy. 

Theorem \ref{theorem:required-condition-stochastic-multi} below provides sufficient conditions under which the Frobenius norm of the gradient of the optimal Q-function is bounded. The proof is provided in the full version available at \cite{memarian2023robust}.
\begin{theorem}
    \label{theorem:required-condition-stochastic-multi}
    Consider an infinite-horizon discounted MDP $\mathcal{M} = \langle S, A,T, r , \gamma \rangle $ where the reward function is $L$-Lipschitz continuous.
    Let $\nabla_{s_{t}^i}$ be the gradient operator with respect to the $i^{th}$ dimension of the state space at time $t$.
    If there exists a constant $C$ such that the following inequalities hold for all $ i, k $:
    \begin{equation}
    \label{ineq:to-be-proved-inequality-theorem-multi}
    \begin{aligned}
        &\abs{ \nabla_{s_{t}^i}  \mathbb{E}^{*}_{s_{t+k} |s_{t}} \left[ r(s_{t+k}) \right] } 
        \leq  C&  \mathbb{E}^{*}_{s_{t+1} | s_{t}} \abs{  \nabla_{s_{t+1}^i} \mathbb{E}^{*}_{s_{t+k} | s_{t+1}} \left[  r(s_{t+k})   \right]},
    \end{aligned}
    \end{equation}
    then, it holds that $\norm{\nabla_{s_t} Q^*(s_t, a_t)}_F 
        \leq \sqrt{N} \, L \sum_{k=0}^\infty (\gamma \, C)^k.$
\end{theorem}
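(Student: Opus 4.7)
The plan is to start from the per-coordinate identity in (\ref{eq:jacob-q2}) and reduce the gradient of the optimal Q-function at time $t$ to a gradient of the reward function at some future time, which can then be bounded by $L$ thanks to the $L$-Lipschitz assumption on $r$. The hypothesis (\ref{ineq:to-be-proved-inequality-theorem-multi}) is exactly designed to let the gradient ``travel forward in time'' by one step at a cost of a multiplicative factor $C$, so iterating it $k$ times should yield a term of the form $C^k \mathbb{E}^*_{s_{t+k}|s_t}|\nabla_{s_{t+k}^i} r(s_{t+k})|$.

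Concretely, I would first apply the triangle inequality to (\ref{eq:jacob-q2}) to move the absolute value inside the sum:
\begin{equation*}
|\nabla_{s_t^i} Q^*(s_t,a_t)| \le \sum_{k=0}^\infty \gamma^k \, |\nabla_{s_t^i} \mathbb{E}^*_{s_{t+k}|s_t,a_t}[r(s_{t+k})]|.
\end{equation*}
Next, I would argue by induction on $k$ that
\begin{equation*}
|\nabla_{s_t^i} \mathbb{E}^*_{s_{t+k}|s_t}[r(s_{t+k})]| \le C^k \, \mathbb{E}^*_{s_{t+k}|s_t}\bigl[|\nabla_{s_{t+k}^i} r(s_{t+k})|\bigr].
\end{equation*}
The base case $k=0$ is trivial. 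For the inductive step, I apply (\ref{ineq:to-be-proved-inequality-theorem-multi}) to bound the left-hand side by $C\,\mathbb{E}^*_{s_{t+1}|s_t}|\nabla_{s_{t+1}^i} \mathbb{E}^*_{s_{t+k}|s_{t+1}}[r(s_{t+k})]|$, invoke the inductive hypothesis at time $t+1$ inside the expectation, and then collapse the nested conditional expectations via the tower property to obtain $C^k \mathbb{E}^*_{s_{t+k}|s_t}[|\nabla_{s_{t+k}^i} r(s_{t+k})|]$.

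Since $r$ is $L$-Lipschitz, each partial derivative $|\nabla_{s_{t+k}^i} r(s_{t+k})|$ is bounded by $L$, so the conditional expectation is also bounded by $L$. Combining this with the triangle inequality bound yields
\begin{equation*}
|\nabla_{s_t^i} Q^*(s_t,a_t)| \le L \sum_{k=0}^\infty (\gamma C)^k,
\end{equation*}
uniformly in $i$. Finally, the Frobenius norm is just $\sqrt{\sum_{i=1}^N |\nabla_{s_t^i} Q^*(s_t,a_t)|^2}$, so summing $N$ copies of the squared bound and taking the square root produces the advertised $\sqrt{N}\,L\sum_{k=0}^\infty (\gamma C)^k$.

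The main obstacle I anticipate is the inductive step: one has to be careful that the hypothesis (\ref{ineq:to-be-proved-inequality-theorem-multi}) is being applied to the correct functional of $s_{t+1}$ (namely the map $s_{t+1}\mapsto \mathbb{E}^*_{s_{t+k}|s_{t+1}}[r(s_{t+k})]$, which is itself a Q-like quantity), and then that pushing the absolute value inside the expectation $\mathbb{E}^*_{s_{t+1}|s_t}[\cdot]$ is legitimate and compatible with the tower property along the optimal-policy trajectory distribution. Once that bookkeeping is pinned down, the rest is a geometric-series bound and a coordinate-wise Euclidean-to-Frobenius estimate; convergence of the series additionally requires $\gamma C < 1$, which should be stated (implicitly assumed) for the bound to be finite.
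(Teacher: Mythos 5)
Your proposal is correct and follows essentially the same route as the paper: the paper's own proof iteratively unrolls the hypothesis inequality $k$ times (your induction, stated informally with repeated application and the tower property) to get the per-term bound $C^k L$, then applies the triangle inequality, the geometric series, and the coordinate-wise $\sqrt{N}$ estimate exactly as you describe. Your remark that $\gamma C < 1$ is needed for finiteness is also made in the paper immediately after the theorem.
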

It is important to understand the implications of the inequality \eqref{ineq:to-be-proved-inequality-theorem-multi} of Theorem.~\ref{theorem:required-condition-stochastic-multi}. 
This inequality provides a sufficient condition for the Lipschitzness of the optimal Q-function with respect to the state dimensions. 
In simple words, inequality \eqref{ineq:to-be-proved-inequality-theorem-multi}
holds for a specific $i$, if the change in the expected value of the reward at time step $t+k$ is comparable for the following two cases: when we perturb the $i\ts{th}$ dimension of the state at time $t$ and when we perturb the $i\ts{th}$ dimension of the state at time $t+1$.
The optimal Q-function at any state-action pair is the expected value of the sum of the future discounted rewards when the agent follows the optimal policy starting from that state-action pair. 
Hence, for the gradient of the Q-function to be bounded, the gradient of the reward at a future time with respect to the current state needs to be bounded, and  \eqref{ineq:to-be-proved-inequality-theorem-multi} ensures that this condition is met. 
In the full version available at \cite{memarian2023robust}, we provide a simplified version of Theorem.~\ref{theorem:required-condition-stochastic-multi} for the special case of deterministic dynamics with a more explicit condition on the dynamics.

\vspace{-2mm}
\section{Lipschitz-Inducing Regularization for GAIL}
\label{sec:method}

As we show in Section \ref{sec:experiments}, the policies learned by GAIL are not robust to noise introduced at test time which may arise from deploying the policy in an evolving environment. 

As we established in Section \ref{sec:thm}, the Lipschitzness of the discriminator, under the conditions of Theorem \ref{theorem:required-condition-stochastic-multi}, implies the Lipschitzness of the optimal Q-function, and in turn the generator. Hence, in this section,
we propose a regularized version of GAIL, which we call locally-Lipschitz GAIL (LL-GAIL), that learns a robust policy through a local-Lipschitzness-inducing training procedure.\footnote{While we primarily focus on GAIL, the arguments may extend to other AIL algorithms as well.} 

Recent works study the link between accuracy and robustness in the context of deep neural network classifiers and conclude that inducing local Lipschitzness in the classifiers can enhance robustness without compromising accuracy \cite{zhang2019theoretically, yang2020closer}. Intuitively, locally Lipschitz classifiers enjoy wider and smoother classification boundaries which in turn results in less sensitivity to inconsistencies between training and test data. 
Inspired by these works, we study the effect of local Lipschitzness of the discriminator and the generator on the robustness of the imitation policy in generative adversarial imitation learning methods. 
We consider two scenarios: 1) We encourage local Lipschitzness in the discriminator through a novel regularization method and study how it affects the robustness of the resultant generator, and 2) We induce local Lipschitzness directly in the generator by regularizing the objective function of the generator and investigate the link between local Lipschitzness of the generator and its robustness to noise on observations at test time.

Similar to the original work on GAIL \cite{ho2016generative}, we use deep neural networks to represent the policy $\pi_\theta$ with parameters $\theta$ and the discriminator $D_\phi$ with parameters $\phi$. 
In the next two subsections, we discuss how we induce local Lipschitzness in the discriminator and the generator. 

\vspace{-3mm}
\subsection{Inducing Local Lipschitzness in the Discriminator}
\label{subsec:lip-disc}

To discuss our method to induce the local Lipschitzness of the discriminator, the first step is to define proper metrics for the input and output space of the discriminator. 
The discriminator $D:S \times A \to (0,1)$ is a classifier which maps the state-action space to a real number in the range $(0,1)$ specifying the probability that the state-action pair is sampled from the generator. 
In this work, we are only interested in robustness to noise on the observations, not the actions; hence, we only consider the local Lipschitzness properties of the discriminator with respect to the states and not the actions. This is mainly motivated by the fact that the acquired states at test time might be subject to noise due to changes in the environment or the failure of the agent's sensors.
The state space for the environments we are considering, i.e., robot locomotion environments, is a subspace of $\mathbb{R}^N$.
Consequently, two appropriate metrics for the state space are the $L_2$ norm and the $L_\infty$ norm. 
While we have performed experiments with both of these norms, we use the $L_2$ norm in the derivations presented in this paper.

We consider two options as the metric for the output space of the discriminator. 
One option is to use the raw output to construct a categorical probability distribution (with two classes) and use a metric such as the Jensen-Shannon divergence. 
The other option is to simply use the $L_1$ norm to measure the variations in the raw output of the discriminator which is a real number in the range $(0,1)$. 
We choose the second option in our derivation and the experiments.

In order to induce local Lipschitzness in the discriminator, we present a regularized version of the GAIL objective for updating the discriminator:
\begin{equation}
\begin{aligned}
    \label{opt:disc-reg}
    \argmax_{D} &~~ \mathbb{E}_{\pi_{\theta}}[\log(D(s,a))] &+ \mathbb{E}_{\pi_E}[\log(1-D(s,a))] - \gamma R_d(\mathcal{D}_d)
    \end{aligned}
\end{equation}
where $R_d(\mathcal{D}_d)$ is the regularization term and $\mathcal{D}_d$ is the training data for updating the discriminator at a given iteration. $\mathcal{D}_d$ consists of a collection of state-action pairs $(s,a)$ sampled from the generator, and a collection of state-action pairs $(s,a)$ sampled from the demonstrations. 


To compute the regularization term, for each $(s, a) \in \mathcal{D}_d$, we find an adversarial perturbation $\delta_{s,a}$ with an $L_2$ norm smaller than or equal to a hyper-parameter $r_p$:
\begin{equation}
\label{opt:adv-pert-disc}
\begin{aligned}
    \delta_{s,a} =
    \begin{cases}
    \argmax_\delta ~ \big|  D_\phi(s + \delta, a) - D_\phi(s,a) \big| \\
    s.t. ~~ ||\delta||_2 \leq r_p
    \end{cases}    
\end{aligned}
\end{equation}
and then we compute the regularization term as follows:
\begin{align*}
    R_d(\mathcal{D}_d) = \frac{1}{|\mathcal{D}_d|} \sum_{(s, a) \in \mathcal{D}_d} \big|  D_\phi(s+\delta_{s,a}, a) - D_\phi(s,a) \big|.
\end{align*}
The hyper-parameters $\gamma$ and $r_p$ aim to quantify the constant and the radius of the local Lipschitzness of the discriminator. 
The above procedure is summarized in Algorithm~\ref{alg:lip-disc}.

Since finding the optimal solution of \eqref{opt:adv-pert-disc} for each state-action pair is not computationally feasible, we instead propose to use projected gradient ascent with a pre-determined number of steps to get an approximate solution. 
Intuitively, the regularizer $R_d(\mathcal{D}_d)$ penalizes discriminators whose output varies significantly as a result of small perturbations in the input, and in turn induces local Lipschitzness. 
The above procedure is summarized as Algorithm~\ref{alg:lip-disc}.

\begin{algorithm}[t]
\caption{Updating the discriminator by regularizing for local Lipschitzness at iteration $i$}
\label{alg:lip-disc}
\begin{algorithmic}[1]
\STATE {\bfseries Input:} $D_{\phi_{i-1}}$: Current discriminator. $\pi_{\theta_{i-1}}$: Current generator.
\STATE {\bfseries Output:} An updated discriminator $D_{\phi_i}(s)$
\STATE {\bfseries Hyper-parameters:} $\gamma$: Regularization coefficient. $r_p$: perturbation radius. The choice between $L_2$ or $L_\infty$ norm to measure adversarial perturbations. 
    \STATE Form the training data $\mathcal{D}_d$ through a collection of state-action pairs sampled from generator $\pi_{\theta_{i-1}}$ and a collection of state-action pairs from the demonstrations
    \STATE Using $(s,a) \sim \mathcal{D}_d$, forward propagate through the discriminator to form GAIL's discriminator loss ($L_{ppo, d}$)
    \STATE For each $(s, a) \in \mathcal{D}_d$, perform $N$ steps of projected gradient ascent to find an adversarial perturbation $\delta_{s,a}$ within the $L_2$ (or $L_\infty$) ball of radius $r_p$:
    \begin{align*}
     \delta_{s,a} = 
        \begin{cases}
        \argmax_\delta ~ \abs{ (D_{\phi_{i-1}}(s, a)-  D_{\phi_{i-1}}(s+\delta, a))}  \\
         s.t. ~~ ||\delta||_2 \leq r_p ~~ \text{or} ~~ ||\delta||_\infty \leq r_p
        \end{cases}
    \end{align*}
    \STATE Forward propagates through the discriminator to form the regularization term as
        \begin{align*}
        R_d(\mathcal{D}_d) = \frac{\sum_{s, a \in \mathcal{D}_d} \abs{ D_{\phi_{i-1}}(s + \delta_{s,a}, a) - D_{\phi_{i-1}}(s, a) }}{|\mathcal{D}_d|} 
    \end{align*}

    \STATE Loss = $L_{ppo, d} + \gamma \times R_d(\mathcal{D}_d)$
    \STATE Back propagate through Loss to update the weights $\phi$ of the discriminator using an optimization algorithm of choice (Adam)
\end{algorithmic}
\end{algorithm}

\vspace{-3mm}
\subsection{Inducing Local Lipschitzness in the Generator}
\label{subsec:lip-gen}


The generator is a mapping from the state space to the space of probability distributions over the actions. 
As discussed in Section.~\ref{subsec:lip-disc}, we choose the $L_2$ norm as the metric on the state space. 
For the output space of the generator
we choose the Jeffreys divergence as a metric which is a symmetric version of the 
Kullback–Leibler (KL) divergence \cite{jeffreys1948theory}. In addition to the Jeffreys divergence, our framework can use the Jensen-Shannon divergence or any other metric over the space of probability distributions. 
The Jeffreys divergence between two probability distributions $p$ and $q$ is defined as 
\begin{equation}
    \label{def:jeffery-div}
    \begin{aligned}
    D_J(p \parallel q) &:= \int (p(x) - q(x))\big( \ln p(x) - \ln q(x) \big) dx 
    &= D_{KL}(p \parallel q) + D_{KL}(q \parallel p)
    \end{aligned}
\end{equation}
where $D_{KL}$ denotes the KL divergence. 

At a given iteration of the proposed LL-GAIL method, to collect the training data for the generator, we sample $m$ trajectories from the latest generator $\pi_{\theta}$ to form the set $\mathcal{D}_g = \{ \tau_j | \tau_j \sim \pi_{\theta} \}_{j=1}^m$. 
Our proposed regularized objective for generator updates is:
\begin{equation}
    \label{opt:gen-reg}
    \argmin_\pi ~  \mathbb{E}_{\pi}[\log(D(s,a))] - \lambda H(\pi) ~ +  ~ \gamma \, R_g(S_g),
\end{equation}
where $S_g = \{s | s \in \mathcal{D}_g \}$ is the set of all states in $\mathcal{D}_g$, and $R_g(S_g)$ is the regularization term.

We define $R_g(S_g)$ such that it encourages a locally Lipschitz generator. 
Intuitively, $R_g(S_g)$ penalizes generators whose output undergoes large variations as a result of small perturbations in their input. 
To compute $R_g(S_g)$, first, for every state $s \in S_g$ we compute $\delta_s$ by
\begin{equation}
\label{opt:adv-pert-gen}
    \delta_s = 
    \begin{cases} \argmax_\delta ~ D_J(\pi_\theta(s) \, || \, \pi_\theta(s+\delta)) \\
    s.t. ~~ ||\delta||_2 \leq r_p    
    \end{cases}
\end{equation}
where $r_p$ is a hyper-parameter that influences the radius of local Lipschitzness that the regularizer induces. 
We then compute $R_g(S_g)$ as follows:
\begin{align*}
    R_g(S_g) = \frac{1}{|S_g|} \sum_{s \in S_g} D_J(\pi_{\theta_i}(s) \, || \, \pi_{\theta_i}(s+\delta_s)).
\end{align*}
Note that $\delta_s$ is the perturbation within the $L_2$ ball of radius $r_p$ which causes the largest divergence in the policy. Since it is computationally infeasible to find the exact solution to \eqref{opt:adv-pert-gen} for every state, we instead use projected gradient ascent steps to get close to the solution. 

GAIL uses TRPO \cite{schulman2015trust} steps to update the generator. 
In this work, however, we use the simpler and more computationally efficient PPO algorithm \cite{schulman2017proximal} instead of TRPO for generator updates. 
Hence, we perform PPO steps on \eqref{opt:gen-reg} to update the generator. 
The above procedure is summarized as Algorithm~\ref{alg:lip-gen}.

\begin{algorithm}[t]
\caption{Updating Generator by Regularizing for Local Lipschitzness at iteration $i$}
\label{alg:lip-gen}
\begin{algorithmic}[1]
\STATE {\bfseries Input:} Current discriminator $D_{\phi_{i}}(s)$, current generator $\pi_{\theta_{i-1}}(a|s)$, 
\STATE {\bfseries Output:} An updated generator $\pi_{\theta_i}(a|s)$ 
\STATE {\bfseries Hyper-parameters:} Regularization coefficient $\gamma$. Perturbation radius $r_p$. The choice between $L_2$ or $L_\infty$ norm to measure adversarial perturbations. 
    \STATE Sample $m$ trajectories from the current generator $\pi_{\theta_{i-1}}(a|s)$ to form $\mathcal{D}_g$
    \STATE Use data points in $\mathcal{D}_g$ to forward propagate through the generator and form GAIL's generator loss $(L_{ppo, g})$
    \STATE Extract $S_g$ from $\mathcal{D}_g$
    \STATE For each $s \in S_g$ perform N steps of projected gradient ascent to find an adversarial perturbation $\delta_s$ within the $L_2$ (or $L_\infty$) ball of radius $r_p$:
    \begin{align*}
        \delta_s = 
        \begin{cases}
            \argmax_\delta ~ D_J(\pi_{\theta_{i-1}}(.|s) \, || \, \pi_{\theta_{i-1}}(.|s+\delta_s)) \\
            s.t. ~~ ||\delta||_2 \leq r_p ~~ \text{or} ~~ ||\delta||_\infty \leq r_p
        \end{cases}
    \end{align*}
    \STATE Forward propagates through the generator to form the regularization term
    \begin{align*}
        R_g(S_g) = \frac{1}{|\mathcal{D}_d|} \sum_{s \in S_g} J(\pi_{\theta_{i-1}}(.|s) \, || \, \pi_{\theta_{i-1}}(.|s+\delta_s))
    \end{align*}
    \STATE Loss $= L_{ppo,g} + \gamma \times R_g(\mathcal{D}_d)$
    \STATE Back propagate through Loss to update the weights $\theta$ of the generator using an optimization algorithm of choice (Adam)
\end{algorithmic}
\end{algorithm}

\begin{figure}
\centering
\subfigure[\footnotesize Walker2d, reg. disc.]{\label{subplot:Walker2d-d} \includegraphics[width=0.24\textwidth]{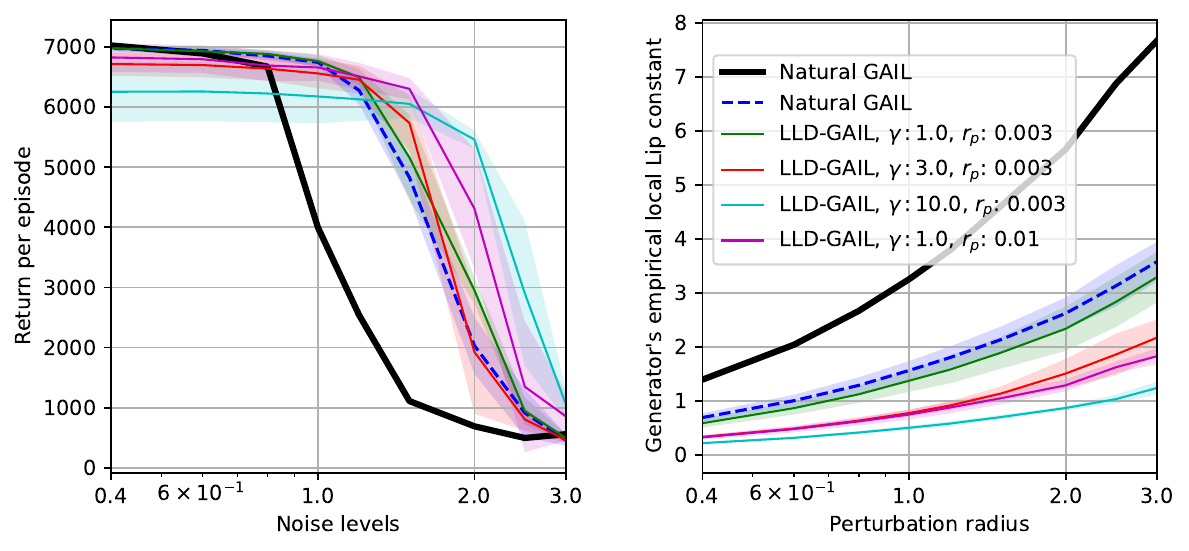}}
\hfill
\subfigure[\footnotesize Walker2d, reg. gen.]{\label{subplot:Walker2d-g} \includegraphics[width=0.24\textwidth]{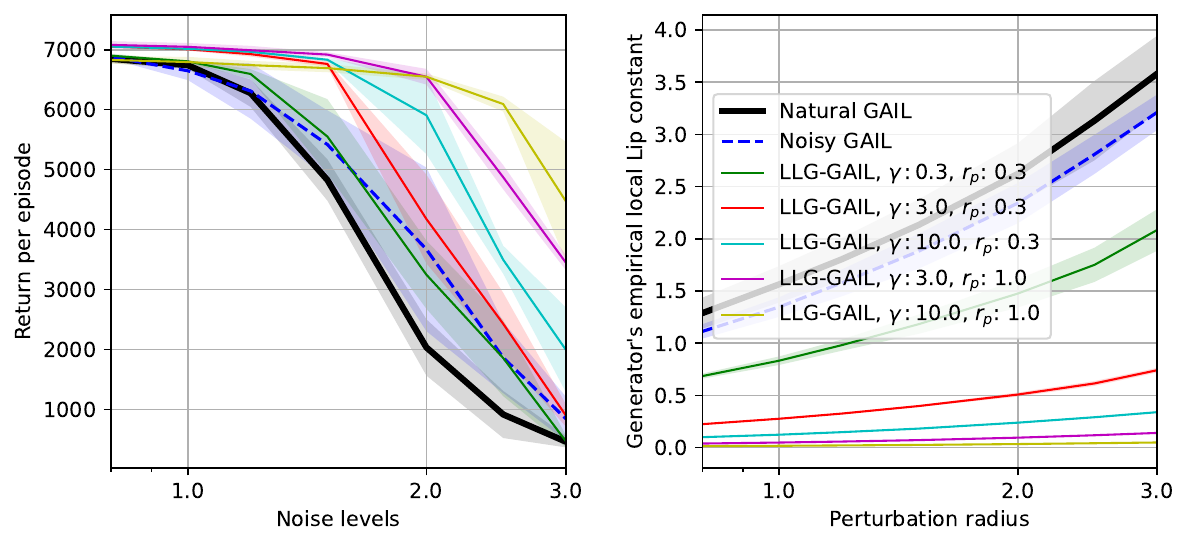}}%
\hfill
\subfigure[\footnotesize Hopper, reg. disc.]{\label{subplot:Hopper-d} \includegraphics[width=0.24\textwidth]{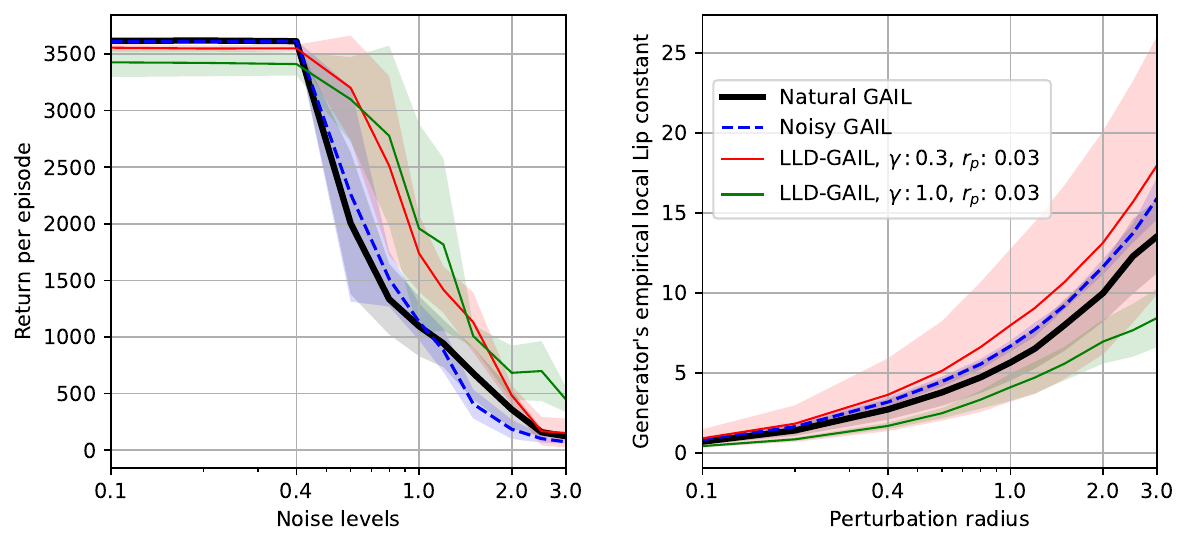}}%
\hfill
\subfigure[\footnotesize Hopper, reg. gen.]{\label{subplot:Hopper-g} \includegraphics[width=0.24\textwidth]{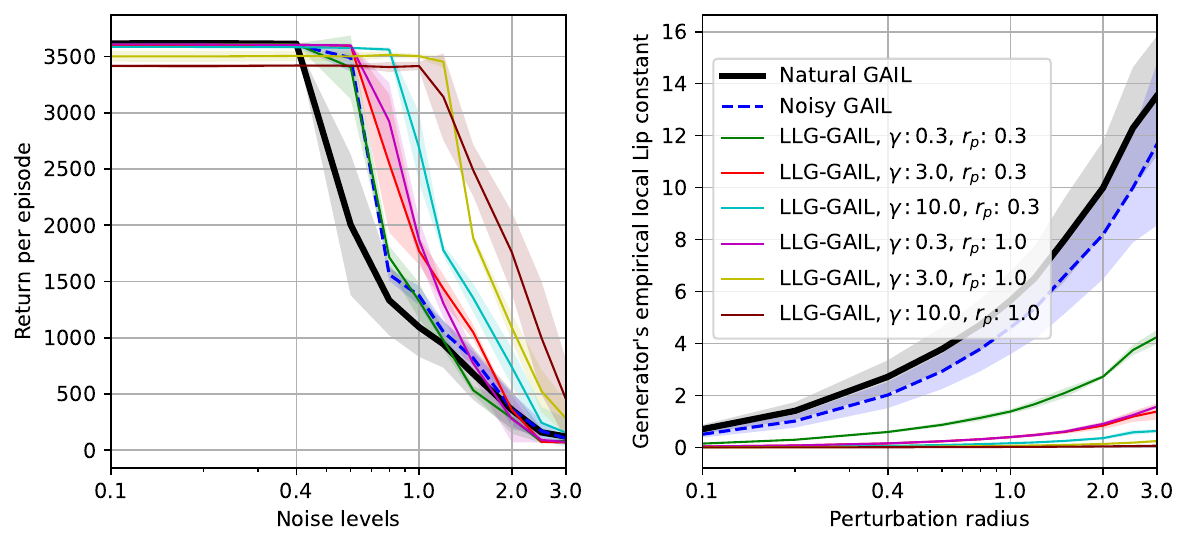}}%
\hfill
\subfigure[\footnotesize HalfCheetah, reg. disc.]{\label{subplot:HalfCheetah-d} \includegraphics[width=0.24\textwidth]{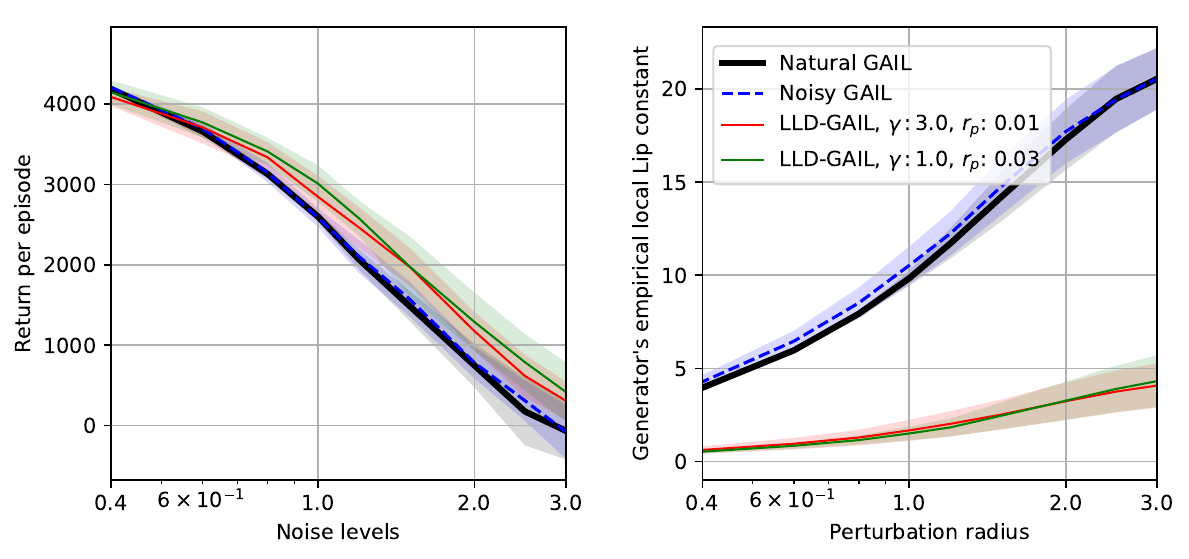}}%
\hfill
\subfigure[\footnotesize HalfCheetah, reg. gen.]{\label{subplot:HalfCheetah-g} \includegraphics[width=0.24\textwidth]{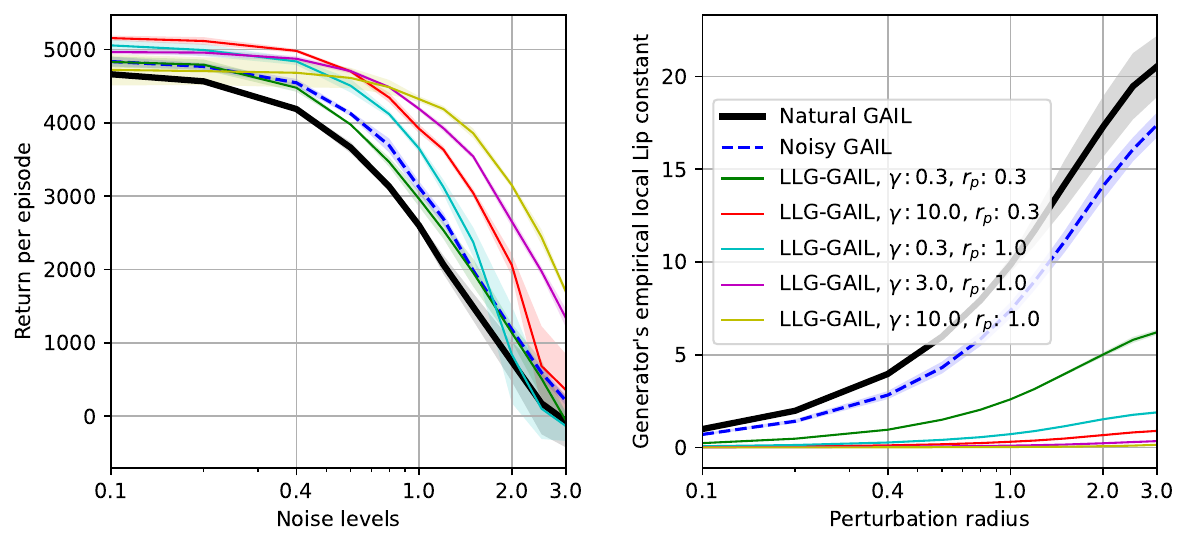}}%
\hfill
\subfigure[\footnotesize Ant, reg. disc.]{\label{subplot:Ant-d} \includegraphics[width=0.24\textwidth]{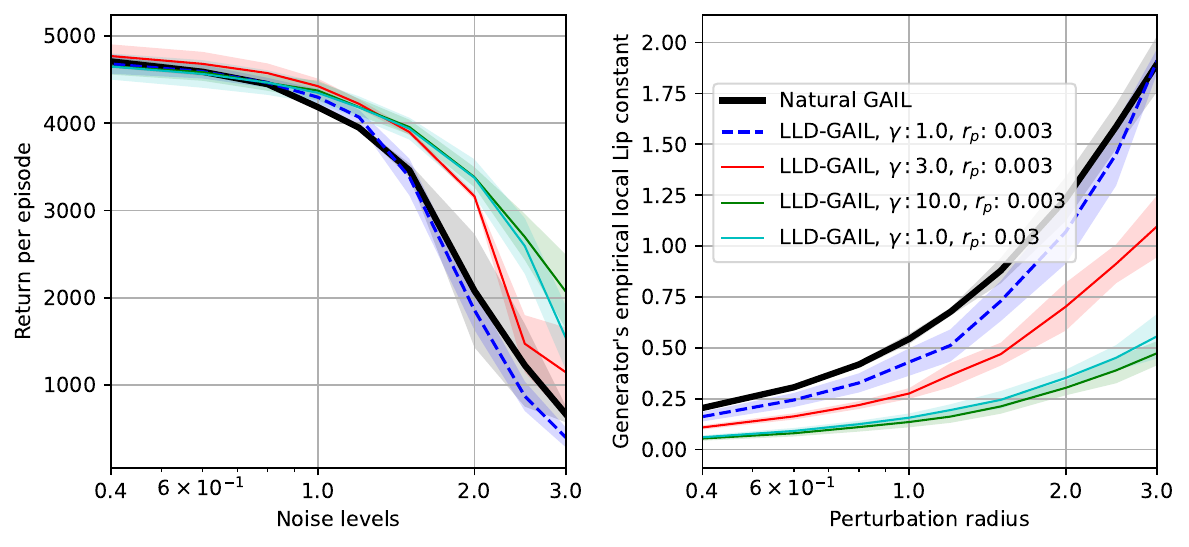}}%
\hfill
\subfigure[\footnotesize Ant, reg. gen.]{\label{subplot:Ant-g} \includegraphics[width=0.24\textwidth]{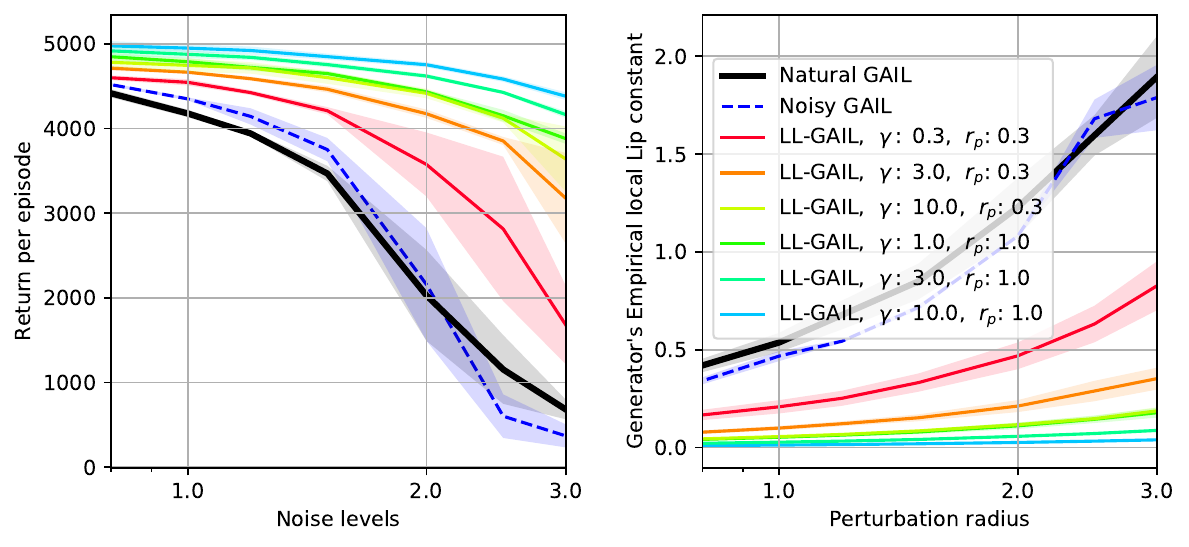}}%
    \caption{\footnotesize The comparison between LL-GAIL and the benchmarking schemes natural GAIL and noisy GAI on several simulated robot locomotion environments in the MuJoCo suite \cite{todorov2012mujoco}.
    The figures show the generators learned by LL-GAIL methods (either LLD-GAIL or LLG-GAIL) are more robust to observation noise compared to the baselines, as the proposed regularization methods improve the empirical local Lipschitzness constant (ELLC) of the trained generators.}
    \label{fig:dis}
    \vspace{-6mm}
\end{figure}

\vspace{-4mm}
\section{Experiments}

\label{sec:experiments}
In this section, we demonstrate that policies learned by LL-GAIL are more robust to observation noise than those learned by the GAIL algorithm, which we call natural GAIL.\footnote{Additional experiments are in the full version available at \cite{memarian2023robust}.} 
In addition to natural GAIL, we benchmark our results against another baseline, which we call \textit{noisy GAIL}. 
The difference between noisy GAIL and natural GAIL is that noisy GAIL introduces random observation noise at training time to robustify the learned policy to observation noise at test time.

We perform experiments on several simulated robot locomotion environments in the MuJoCo suite \cite{todorov2012mujoco}, namely Walker2d, Hopper, HalfCheetah, and Ant.

\textbf{Regularizing the Discriminator.}
First, we investigate the effect of inducing local Lipschitzness in the discriminator of LL-GAIL
(see Figures~\ref{subplot:Walker2d-d}, \ref{subplot:Hopper-d},  \ref{subplot:HalfCheetah-d},\ref{subplot:Ant-d}). 
We refer to the discriminator-regularized LL-GAIL as LLD-GAIL.
Using optimization problem \ref{opt:disc-reg}, we train LLD-GAIL on a range of hyper-parameters $\gamma$ and $r_p$. 

Figure~\ref{subplot:Walker2d-d} benchmarks the LLD-GAIL against the baselines for Walker2d. 
The plot on the left-hand side of Figure~\ref{subplot:Walker2d-d} compares the performance of the generators learned by different models in test scenarios where the observations are corrupted by different levels of noise. 
To simulate the effect of noise on observations, we add zero-mean Gaussian noise to each dimension of the state space and the reported noise level is the standard deviation of the Gaussian noise. 

To understand the link between the local Lipschitzness of the generators and their robustness, the plot on the right-hand side of Figure~\ref{subplot:Walker2d-d} displays the empirical local Lipschitzness constant (ELLC) of the trained generators. 
The ELLC of a generator $\pi_\theta$ at a given radius $r_p$ is defined as $\mathbb{E}_{s, \norm{\delta}_2 = r_p} \left[ D_J(\pi_\theta(s)||\pi_\theta(s+\delta) / r_p \right]$, where the expectation is approximated by Monte Carlo sampling using 3840 samples which come from 30 trajectories of length 128 produced by the generator.
The ELLC is an empirical metric that quantifies the local Lipschitzness properties of a trained generator. 
A generator with a higher EELC is more sensitive to perturbations in the observations.

To train the noisy GAIL baseline in  Figures~\ref{subplot:Walker2d-d}, \ref{subplot:Hopper-d},  \ref{subplot:HalfCheetah-d},\ref{subplot:Ant-d}), we perturb the observations fed to the discriminator with zero-mean Gaussian noise. 
We train the noisy GAIL with a range of noise levels but we only report the results corresponding to the best training noise level (refer to the full version available at \cite{memarian2023robust} for more details). 

Figure~\ref{subplot:Walker2d-d} demonstrates that the proposed discriminator-regularization method helps LLD-GAIL outperform the baselines by improving the Lipschitzness properties of the generator and in turn those of the imitation policies. 
These observations corroborate the theoretical insights discussed in Section~\ref{sec:thm}. Figure~\ref{subplot:Walker2d-d} further shows that the generators that have a smaller ELLC at a given radius, perform better at a noise level comparable to that radius than generators that have a larger ELLC. The same pattern is observed in Figures.~\ref{subplot:Hopper-d}, \ref{subplot:HalfCheetah-d} when comparing the proposed method to the baselines in different environments.

\begin{wrapfigure}{r}{0.5\textwidth}
\vspace{-3mm}
  \begin{center}
    \includegraphics[width=0.48\textwidth]{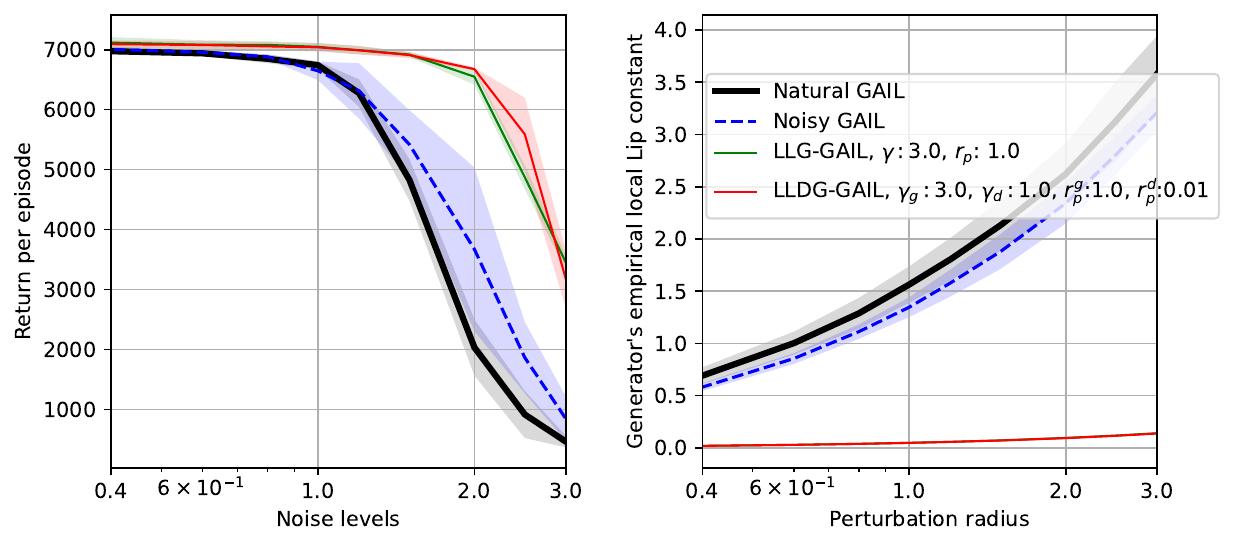}
  \end{center}
\caption{\footnotesize Walker2d experiment: LL-GAIL with both discriminator and generator regularizer outperforms all methods across various noise levels. $\gamma_g$ and $r^g_p$ are the hyper-parameters for regularizing the generator of LLDG-GAIL, and $\gamma_d$ and $r^d_p$ are the hyper-parameters for regularizing the discriminator of LLDG-GAIL.}
\label{plot:both-on}
\vspace{-6mm}
\end{wrapfigure}

\textbf{Regularizing the Generator.}
Next, we focus on the effect of directly inducing local Lipschitzness in the generator of LL-GAIL on the robustness of the generator (see Figures.~\ref{subplot:Walker2d-g}, \ref{subplot:Hopper-g},  \ref{subplot:HalfCheetah-g}, \ref{subplot:Ant-g}). 
We refer to this method as LLG-GAIL. 
We train the LLG-GAIL according to optimization problem \eqref{opt:gen-reg} (see the full version available at \cite{memarian2023robust}

When training the noisy GAIL baseline for this set of experiments, we add zero-mean Gaussian noise to the observations fed to the generator. 
We report the best results we obtained for the noisy GAIL baseline across different variances. 

Figure~\ref{subplot:Walker2d-g} depicts that LLG-GAIL significantly outperforms both natural GAIL and noisy GAIL across a wide range of noise levels. 
Comparing the left-hand plot and the right-hand plot of Figure~\ref{subplot:Walker2d-g}, we observe that the generators with smaller ELLCs vastly outperform those with larger ELLCs, especially at larger noise levels. 
This observation is consistent with our arguments in Sections \ref{sec:method} and \ref{sec:thm} about the vital role of local Lipschitzness of the generator in the robustness of the generator to observation noise. A similar pattern is observed in Figures.~\ref{subplot:Hopper-g}, \ref{subplot:HalfCheetah-g} when comparing the proposed method to the baselines in different environments. 

{\color{black}
\textbf{Regularizing both generator and discriminator.}
In this section, we investigate the benefit of regularizing both generator and discriminator. 
We refer to this method as LLDG-GAIL.
To this end, we compare LLDG-GAIL to the best of LLD-GAIL and LLG-GAIL. 
We do not perform an additional hyper-parameter tuning for LLDG-GAIL. Instead, for regularizing its discriminator, we use the best hyper-parameters we found for LLD-GAIL, and for regularizing its generator we use the best hyper-parameters we found for LLG-GAIL. 
As Figure \ref{plot:both-on} demonstrates, the LLDG-GAIL enjoys improved reward and ELLC compared to all of the benchmarking schemes across various noise levels for the Walker2d environment. For more experiments on LLDG-GAIL refer to the full version available at \cite{memarian2023robust}.
While LLDG-GAIL outperforms LLG-GAIL and LLD-GAIL, we observe diminishing returns compared to only regularizing the generator or discriminator. 
We identify two possible explanations for this phenomenon. 
First, both of the proposed regularizers aim to promote robustness through inducing local Lipschitzness, and given this common goal, observing a saturated improvement is expected. 
Additionally, the absence of hyperparameter tuning contributes to such a phenomenon.

}

\vspace{-4mm}

\section{Conclusions}
\label{sec:conclusions}

We studied the robustness of GAIL to corrupted observations at test time. 
In such scenarios, the natural training of GAIL leads to learning policies that are highly sensitive to the level of observation noise. 
To remedy this shortcoming, we argued Lipschitz policies are more likely to remain agnostic to the observation noise. 
Subsequently, we proposed a regularization method to induce local Lipschitzness in the generator and the discriminator of adversarial imitation learning methods, which results in learning significantly more robust policies. We further provided theoretical insights and experimental support into the effectiveness of the proposed regularization method. 

\clearpage
\acks{This research was partly supported by the Army Research Lab and the National Science Foundation through the following grants: ARL W911NF2020132, NSF 1652113, and ARL ACC-APG-RTP W911NF1920333.

This work has also taken place partly in the Personal Autonomous Robotics Lab (PeARL) at The University of Texas at Austin. PeARL research is supported in part by the NSF (IIS-1724157, IIS-1638107, IIS-1749204, IIS-1925082), ONR (N00014-18-2243), AFOSR (FA9550-20-1-0077), and ARO (78372-CS).  This research was also sponsored by the Army Research Office under Cooperative Agreement Number W911NF-19-2-0333. The views and conclusions contained in this document are those of the authors and should not be interpreted as representing the official policies, either expressed or implied, of the Army Research Office or the U.S. Government. The U.S. Government is authorized to reproduce and distribute reprints for Government purposes notwithstanding any copyright notation herein.}

\bibliography{irl.bib}

\clearpage
\appendix
\section{Related Work}
\label{sec:related-work}

\textbf{Robust RL.} 
Recently, \cite{shen2020deep} applied a smoothness-inducing regularization to policies and Q-functions for both on-policy and off-policy RL methods and demonstrated improved sample efficiency and robustness. 
Different from \cite{shen2020deep}, we aim to improve the robustness of imitation learning algorithms as opposed to RL algorithms that learn from a pre-determined reward function. 
Moreover, we demonstrate -- both experimentally and theoretically -- that imposing local Lipschitzness on the discriminator, which acts as a surrogate for the reward function, leads to improving the robustness of the generator and in turn the learned policy. 
Reference \cite{pinto2017robust} proposes a method for robust adversarial RL by learning an additional adversarial policy. 
Their method makes the agent robust to adversarially perturbed environments by resorting to $H_\infty$
control methods. In contrast, we identify the local Lipschitzness of the discriminator and generator as two important factors for improved generalization and robustness of GAIL and propose methodologies to promote them.

\textbf{Robust Adversarial Imitation Learning.}
Adversarial inverse reinforcement learning (AIRL) \cite{fu2017learning} infers a reward function from demonstrations that is robust to changes in dynamics as the inferred reward is disentangled from the dynamics of the environment. 
This is different from our work since, we focus on the robustness of the learned policy, not the reward, and identify the local Lipschitzness of the discriminator as a mechanism to robustify the learned policy. 
Additionally, our method improves robustness with respect to noise on observations, whereas AIRL addresses robustness with respect to the dynamics of the MDP. 

\textbf{Regularized GANs.}
Spectral normalization \cite{miyato2018spectral}, weight clipping \cite{arjovsky2017wasserstein}, and gradient penalty \cite{gulrajani2017improved} are among different methods proposed recently to regularize the discriminator to improve the training stability of GANs. Divergent from these works, we are interested in improving the robustness of the policy learned by GAIL.
Recently \cite{choi2021variational} show that spectral clustering introduced in \cite{miyato2018spectral} improves the representation learning capabilities of generative models as it pertains to latent goal discovery in the context of goal-based RL. 
The Loss-Sensitive Generative Adversarial Network (LS-GAN) \cite{qi2020loss} induces a Lipschitz regularity condition on the density of real data, i.e., the space of distributions the GAN learns from, which leads to a regularized model that can generate more realistic samples than ordinary GANs. 
Conversely, we do not set any prior over our training data. Instead, we set a prior over the space of the functions to which the generator and discriminator belong. 

Another related work proposed the Wasserstein GAN (WGAN) algorithm \cite{arjovsky2017wasserstein}.
WGAN minimizes the Wasserstein distance between the data distribution and the generator's distribution.
Using the Kantorovich-Rubinstein duality, the objective is equivalent to a search over the space of k-Lipschitz discriminators. 
WGAN enforces Lipschitzness in the discriminator in a heuristic manner by clipping the weights of the corresponding function approximator which they admit is not the optimal way for enforcing Lipschitzness. 
Reference \cite{gulrajani2017improved}
impose 1-Lipschitzness in the discriminator by limiting the norm of the gradient of the discriminator to 1 at every state which leads to more stable training of WGANs. 

\textbf{Lipschitzness in Adversarial Imitation Learning. }
Wasserstein adversarial imitation learning  (WAIL) \cite{xiao2019wasserstein} extends WGANs to the space of adversarial imitation learning. 
WAIL casts the causal entropy regularized apprenticeship learning problem as minimizing the 1-Wasserstein distance between the occupancy measure of the policy and that of the expert.
By representing the Wasserstein distance in its dual form, the reward function appears as the Kantorovich potential and needs to be 1-Lipschitz. 
The reward function in WAIL is analogous to the discriminator in our formulation. 
A major difference between our method and WAIL is that instead of imposing 1-Lipschitzness of the reward function, we explore methodologies to promote local Lipschitzness of the generator and the discriminator to learn a robust policy. 

Recently, \cite{blonde2020lipschitzness} studied the effect of Lipschitzness of the discriminator on the performance of off-policy AIL methods. 
They use a gradient penalty regularizer to encourage the Lipschitzness of the discriminator. 
Their method improves the performance of GAIL in the training environment.  
However, differently from our work, the focus of \cite{blonde2020lipschitzness} is not on the robustness of the learned policy and they do not study the performance of the learned generator in the presence of observation noise. 
Additionally, while we identify the local Lipschitzness of the generator vital to the robustness of the learned policy, the study in  \cite{blonde2020lipschitzness} is limited to Lipschitzness of the discriminator and its effect on the return of the learned policy as opposed to its robustness. 
The concurrent work \cite{orsini2021matters} further shows that classical regularizers like dropout or weight decay perform on par with Lipschitzness-promoting methods in noiseless settings, while, as we argue, in noise-corrupted environments Lipschitzness plays a vital role in the robustness of the policies learned by GAIL.

\section{Insights on Lipschitzness of the discriminator and the generator}
\label{app-subsec:theory}

As we discussed in Section \ref{sec:introduction} in the main text, the Lipschitzness properties of the discriminator are vital to the robustness of the trained generator. 
In this section, we provide mathematical insight into how inducing Lipschitzness in the discriminator indirectly induces Lipschitzness in the generator, and in turn the imitation policy. 

When updating the parameters of the generator, the discriminator acts as a surrogate for the reward function, i.e., the generator's update amounts to updating a policy through an RL algorithm using the discriminator to obtain a reward function. 
Hence, we alternatively study conditions on an infinite-horizon discounted MDP $\mathcal{M} = \langle S, A,T, r , \gamma \rangle $ with stochastic transition dynamics and an $L$-Lipschitz reward function such that the corresponding optimal Q-function, $Q^{*}(s,a)$, will be Lipschitz.
We recall the definition of a locally Lipschitz function formally below.
\begin{definition} [\textbf{Locally Lipschitz function}]
    Consider the function $f(x): M_1 \to M_2$ which is a mapping from metric space $M_1$ to metric space $M_2$. 
    Let $d_{M_1}(.,.)$ and $d_{M_2}(.,.)$ be distance metrics defined on metric spaces $M_1$ and $M_2$ respectively. 
    Let $b_{M_1, r}(x_0) := \{ x \in M_1 \, | \, d_{M_1}(x,x_0)<r \}$ be the ball of radius $r$ around point $x$ defined by the metric $d_{M_1}$.
    Function $f(x)$ is L locally Lipschitz with radius $r$, if for every $x_0 \in M_1$ we have: $\forall y \in b_{M_1, r}(x_0): d_{M_2}(f(x),f(x_0)) < L \, d_{M_1}(x,x_0)$. If  $r = \infty$, we say that the function $f$ is $L$-Lipschitz.
\end{definition}

We model the environment with an MDP $\mathcal{M} = \langle S, A,T, r , \gamma \rangle $ where the reward function is $L$-Lipschitz continuous.
We further assume that the task of interest is an infinite-horizon continuing task and use the subscript $t$ to refer to the $t\ts{th}$ time step. 
In this work, we are interested in the Lipschitzness of the reward function and the optimal Q-function with respect to the states and not the actions. 
We assume the state space is continuous and the reward function is differentiable everywhere.
We consider the reward function to be a function of states only, i.e., $r(s) :S \to A$.  
With these assumptions, the $L$-Lipschitzness of the reward function translates to $ \norm{\nabla_{s} r(s)}_p < L, \forall s \in S $ for some $L_p$ norm.
Moreover, we define the \textit{optimal Markov chain} as the Markov chain that is obtained by applying the greedy optimal policy to the MDP. 

First, we focus on the case where the state space is one dimensional and then we extend the analysis to the general case of multi-dimensional state space. 

\subsection{Lipschitzness of Optimal Q-function for One-Dimensional State Space}
\label{app-subsec:theory-one-dim}

In this subsection, we investigate the required conditions for the Lipschitz continuity of the optimal Q-function given the Lipschitz continuity of the reward function for the case where the state space is one dimensional. 
The analysis extends straightforwardly to multi-dimensional state-spaces which we cover in the next subsection.

To show the Lipschitz continuity of the optimal Q-function, we need to show that the magnitude of the gradient of the optimal Q-function with respect to the states is bounded.
For a given non-negative integer $k$, we use the notation $\mathbb{E}^{*}_{s_{t+k} |s_t,a_t}[.]$ to denote the expectation of the argument with respect to the conditional distribution of $s_{t+k}$ given that the agent starts from $s_t,a_t$ and follows the optimal policy. 
Similarly, we use the notation $\mathbb{E}^{*}_{s_{t+k} |s_t}[.]$ to denote the expectation of the argument with respect to the conditional distribution of $s_{t+k}$ given that the agent starts from $s_t$ and follows the optimal policy. 

The optimal Q-function, $Q^*$, at $(s_t, a_t)$ can be written as:
\begin{equation} \label{eq:opt-q}
Q^*(s_t, a_t) = \sum_{k=0}^{\infty} \gamma^k \, \mathbb{E}^{*}_{s_{t+k} |s_t,a_t} \left[ r(s_{t+k}) \right].
\end{equation}
Hence the gradient of the optimal Q-function with respect to the state at time $t$ is
\begin{equation} \label{eq:grad-q}
\begin{aligned}
\nabla_{s_t} Q^*(s_t, a_t) &= \nabla_{s_t} \sum_{k=0}^{\infty} \gamma^k \, \mathbb{E}^{*}_{s_{t+k} |s_t,a_t} \left[ r(s_{t+k}) \right] \\
&= \sum_{k=0}^{\infty} \gamma^k \, \nabla_{s_t}  \mathbb{E}^{*}_{s_{t+k} |s_t,a_t} \left[ r(s_{t+k}) \right].
\end{aligned}
\end{equation}
The individual terms of the above sum, i.e., $\gamma^k  \nabla_{s_t}  \mathbb{E}^{*}_{s_{t+k} |s_t,a_t} \left[ r(s_{t+k}) \right]$, measure the change in the expectation of the discounted reward at a future time step, $t+k$, given an infinitesimal perturbation at the current state $s_t$.
In order to upper bound the magnitude of the gradient of $Q^{*}$, 
we need to upper bound the individual terms involved in \eqref{eq:grad-q}, i.e., $\gamma^k \nabla_{s_t}  \mathbb{E}^{*}_{s_{t+k} |s_t,a_t} \left[ r(s_{t+k}) \right]$, and show that their sum is  bounded. 
To this end, we state and prove the following proposition as an intermediate step.
\begin{proposition} \label{prop:first-prop}
    If the following inequality holds for all $t$ and all $k \in \{0, 1, \cdots\}$ 
    \begin{equation} \label{ineq:required-inequality-2}
    \begin{aligned}
        \abs{ \nabla_{s_{t}}  \mathbb{E}^{*}_{s_{t+k} |s_{t}} \left[ r(s_{t+k}) \right] } 
        &\leq  C\,  \mathbb{E}^{*}_{s_{t+1} | s_{t}} \left[ \abs{  \nabla_{s_{t+1}} \mathbb{E}^{*}_{s_{t+k} | s_{t+1}} \left[  r(s_{t+k})   \right]}  \right],
    \end{aligned}
    \end{equation}
    then it holds for every $k \in \{0, 1, \cdots\}$ that:
    \begin{equation}
    \begin{aligned}
        \abs{   \nabla_{s_t}  \mathbb{E}^{*}_{s_{t+k} |s_t,a_t} \left[ r(s_{t+k}) \right] } \leq C^k  L
    \end{aligned}
    \end{equation}    
\end{proposition}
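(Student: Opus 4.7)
The statement is tailor-made for induction on $k$, and that is the route I would take. For cleaner bookkeeping, I would first prove the auxiliary inequality
\[
\abs{\nabla_{s_t} \mathbb{E}^{*}_{s_{t+k}|s_t}\!\left[r(s_{t+k})\right]} \leq C^k L
\]
(with state-only conditioning) for every $k \geq 0$ and every starting time $t$, and then deduce the stated $(s_t,a_t)$-conditioned bound as a short corollary.

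\textbf{Base case.} When $k=0$, the expectation collapses to $r(s_t)$, and the assumed $L$-Lipschitzness of the reward gives $\abs{\nabla_{s_t} r(s_t)} \leq L = C^0 L$. This simultaneously handles the $(s_t, a_t)$-conditioned form since the quantity does not depend on $a_t$.

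\textbf{Inductive step.} Assume the auxiliary bound holds at horizon $k-1$ uniformly in the starting time. Applying hypothesis \eqref{ineq:required-inequality-2} at time $t$ with horizon $k$ yields
\[
\abs{\nabla_{s_t} \mathbb{E}^{*}_{s_{t+k}|s_t}\!\left[r(s_{t+k})\right]} \leq C \, \mathbb{E}^{*}_{s_{t+1}|s_t}\!\left[\abs{\nabla_{s_{t+1}} \mathbb{E}^{*}_{s_{t+k}|s_{t+1}}\!\left[r(s_{t+k})\right]}\right].
\]
The integrand on the right is exactly the auxiliary inequality at starting time $t+1$ with horizon $k-1$, hence uniformly bounded by $C^{k-1} L$. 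Taking expectation of this constant and multiplying by $C$ yields $C^k L$, closing the induction. To recover the $(s_t,a_t)$-conditioned claim for $k \geq 1$, I would decompose on the first transition via $\mathbb{E}^{*}_{s_{t+k}|s_t, a_t}[r(s_{t+k})] = \mathbb{E}_{s_{t+1}\sim T(\cdot|s_t,a_t)}\!\left[\mathbb{E}^{*}_{s_{t+k}|s_{t+1}}[r(s_{t+k})]\right]$, differentiate in $s_t$, apply the analogue of \eqref{ineq:required-inequality-2} with $(s_t,a_t)$ in place of $s_t$, and invoke the auxiliary bound on the inner factor.

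\textbf{Main obstacle.} The inductive mechanics are routine; the delicate point is the exchange of the gradient operator with the expectation over the stochastic next state. This exchange is implicit in hypothesis \eqref{ineq:required-inequality-2}, which effectively packages the $s_t$-dependence of the transition density into the multiplicative factor $C$ and at the same time trades a $\abs{\nabla \mathbb{E}[\cdot]}$ quantity for an $\mathbb{E}[\abs{\nabla \cdot}]$ quantity. Given \eqref{ineq:required-inequality-2} as an assumption, the induction closes in three lines as above; deriving \eqref{ineq:required-inequality-2} from lower-level regularity conditions on $T$ and $r$ (say via dominated convergence plus smoothness of the transition kernel in its initial-state argument) would be the genuinely technical task, but that is not required by the proposition as stated.
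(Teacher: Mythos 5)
Your proof is correct and takes essentially the same route as the paper: the paper's argument is exactly your induction written out as a $k$-fold unrolling of hypothesis \eqref{ineq:required-inequality-2}, collapsing the nested expectations by the tower property and invoking the $L$-Lipschitzness of the reward at the final step. If anything, you are slightly more careful than the paper in explicitly reconciling the $s_t$-conditioned hypothesis with the $(s_t,a_t)$-conditioned conclusion via a first-transition decomposition, a distinction the paper's proof glosses over.
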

\begin{proof}
Starting from inequality \eqref{ineq:required-inequality-2}, we have:
\begin{align*}
    \abs{ \nabla_{s_t}  \mathbb{E}^{*}_{s_{t+k} |s_t,a_t} \left[ r(s_{t+k}) \right] } 
    & \leq  C\,  \mathbb{E}^{*}_{s_{t+1} | s_t} \left[ \abs{  \nabla_{s_{t+1}} \mathbb{E}^{*}_{s_{t+k} | s_{t+1}} \left[  r(s_{t+k})   \right]} \right] \\
    & \leq  C\,  \mathbb{E}^{*}_{s_{t+1} | s_t} C \mathbb{E}^{*}_{s_{t+2} | s_{t+1}} \left[ \abs{  \nabla_{s_{t+2}} \mathbb{E}^{*}_{s_{t+k} | s_{t+2}} \left[  r(s_{t+k})   \right]} \right] ~~~~~ \tag{From \eqref{ineq:required-inequality-2}} \\
    & = C^2 \,  \mathbb{E}^{*}_{s_{t+1} | s_t} \mathbb{E}^{*}_{s_{t+2} | s_{t+1}} \left[ \abs{  \nabla_{s_{t+2}} \mathbb{E}^{*}_{s_{t+k} | s_{t+2}} \left[  r(s_{t+k})   \right]} \right] \\
    & = C^2 \,  \mathbb{E}^{*}_{s_{t+2} | s_t} \left[ \abs{  \nabla_{s_{t+2}} \mathbb{E}^{*}_{s_{t+k} | s_{t+2}} \left[  r(s_{t+k})   \right]}  \right] \\
    & \vdots ~~~~~ \tag{Repeated use of \eqref{ineq:required-inequality-2}} \\
    &\leq C^k \,  \mathbb{E}^{*}_{s_{t+k} | s_t} \left[ \abs{  \nabla_{s_{t+k}} \mathbb{E}^{*}_{s_{t+k} | s_{t+k}} \left[  r(s_{t+k})   \right]} \right] \\
    &= C^k \,  \mathbb{E}^{*}_{s_{t+k} | s_t} \left[ \abs{  \nabla_{s_{t+k}} \left[  r(s_{t+k})   \right]} \right] \\
    &\leq C^k \,  \mathbb{E}^{*}_{s_{t+k} | s_t} [L] \tag{Due to Lipschitzness of reward} \\ 
    &= C^k \, L 
\end{align*}
\end{proof}

The following theorem provides a sufficient condition for upper bounding the magnitude of the gradient of the $Q^{*}$ which is required to prove the Lipschitzness of $Q^{*}$.

\begin{theorem} \label{theorem:first-theorem}
    If the following inequality holds for all $t$ and all $k \in \{0, 1, \cdots\}$ 
\begin{align} \label{ineq:required-inequality-2-theorem}
        \abs{ \nabla_{s_{t}}  \mathbb{E}^{*}_{s_{t+k} |s_{t}} \left[ r(s_{t+k}) \right] } 
        &\leq  C\,  \mathbb{E}^{*}_{s_{t+1} | s_{t}} \abs{  \nabla_{s_{t+1}} \mathbb{E}^{*}_{s_{t+k} | s_{t+1}} \left[  r(s_{t+k})   \right]},
\end{align}
then it holds for all $t$ that
\begin{equation}
\label{eq:norm-grad-q-upper-bound-by-c}
\abs{\nabla_{s_t} Q^*(s_t, a_t)} 
\leq L \sum_{k=0}^\infty (\gamma \, C)^k.
\end{equation}
\end{theorem}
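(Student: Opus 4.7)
The plan is to derive the bound by combining two ingredients already assembled in the excerpt: the series expansion for $\nabla_{s_t} Q^*(s_t,a_t)$ given in \eqref{eq:grad-q}, and the per-term bound $\abs{\nabla_{s_t}\mathbb{E}^{*}_{s_{t+k}|s_t,a_t}[r(s_{t+k})]}\le C^k L$ supplied by Proposition~\ref{prop:first-prop}. Since Proposition~\ref{prop:first-prop} is already proved under exactly the hypothesis \eqref{ineq:required-inequality-2-theorem} of the theorem, the work left for the theorem is essentially the termwise absolute-value estimate followed by summing a geometric series. So the proof should be quite short.

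Concretely, first I would recall \eqref{eq:grad-q}, which writes
\begin{equation*}
\nabla_{s_t} Q^*(s_t,a_t) \;=\; \sum_{k=0}^{\infty} \gamma^{k}\, \nabla_{s_t} \mathbb{E}^{*}_{s_{t+k}|s_t,a_t}\bigl[r(s_{t+k})\bigr],
\end{equation*}
where the interchange of differentiation and the expectation, as well as differentiation and the infinite sum, is permitted by the standing smoothness and Lipschitz assumptions on $r$ together with the discount factor $\gamma \in (0,1)$; I would note this briefly rather than grind through a dominated-convergence argument. Next, apply the triangle inequality for absolute values to move $\abs{\cdot}$ inside the sum, yielding
\begin{equation*}
\abs{\nabla_{s_t} Q^*(s_t,a_t)} \;\le\; \sum_{k=0}^{\infty} \gamma^{k}\, \abs{\nabla_{s_t} \mathbb{E}^{*}_{s_{t+k}|s_t,a_t}\bigl[r(s_{t+k})\bigr]}.
\end{equation*}

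Now I would invoke Proposition~\ref{prop:first-prop}: under the hypothesis \eqref{ineq:required-inequality-2-theorem} (which is the same as \eqref{ineq:required-inequality-2}), each summand satisfies $\abs{\nabla_{s_t}\mathbb{E}^{*}_{s_{t+k}|s_t,a_t}[r(s_{t+k})]}\le C^{k}L$. Substituting into the previous display yields
\begin{equation*}
\abs{\nabla_{s_t} Q^*(s_t,a_t)} \;\le\; L\sum_{k=0}^{\infty} (\gamma C)^{k},
\end{equation*}
which is exactly \eqref{eq:norm-grad-q-upper-bound-by-c}.

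The only genuinely delicate point is reconciling the fact that Proposition~\ref{prop:first-prop} is stated with the conditional $\mathbb{E}^{*}_{s_{t+k}|s_t,a_t}$ while the hypothesis \eqref{ineq:required-inequality-2-theorem} is phrased in terms of $\mathbb{E}^{*}_{s_{t+k}|s_t}$; I would address this in one line by noting that under the optimal (greedy) policy the action $a_t$ is determined by $s_t$, so the two expectations agree at the optimal trajectory used in the expansion of $Q^*$. Beyond this, no extra geometric-series machinery is needed: the bound $L\sum_{k=0}^\infty (\gamma C)^k$ is finite precisely when $\gamma C < 1$, in which case it collapses to $L/(1-\gamma C)$, and this finiteness assumption is the operative implicit hypothesis of the theorem. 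I anticipate no real obstacle — the substantive content of the argument was already absorbed into Proposition~\ref{prop:first-prop}.
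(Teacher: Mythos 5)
Your proposal is correct and follows essentially the same route as the paper's own proof: expand $\nabla_{s_t}Q^*$ via \eqref{eq:grad-q}, apply the triangle inequality termwise, invoke Proposition~\ref{prop:first-prop} to bound each term by $C^kL$, and sum the geometric series. Your added remark reconciling the conditioning on $(s_t,a_t)$ versus $s_t$ alone is a point the paper leaves implicit, but it does not change the argument.
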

\begin{proof}

\begin{align*} \label{eq:norm-grad-q}
    \abs{\nabla_{s_t} Q^*(s_t, a_t)} 
    &= \abs{ \sum_{k=0}^\infty \gamma^k \, \nabla_{s_t}  \mathbb{E}^{*}_{s_{t+k} |s_t,a_t} \left[ r(s_{t+k}) \right] } \\
    &\leq \sum_{k=0}^\infty \gamma^k \abs{  \nabla_{s_t}  \mathbb{E}^{*}_{s_{t+k} |s_t,a_t} \left[ r(s_{t+k}) \right] }  \tag{From.~\eqref{ineq:required-inequality-2-theorem}} \\
    &\leq  \sum_{k=0}^\infty \gamma^k  C^k . L  \tag{From proposition.~\ref{prop:first-prop}}\\
    &= L \sum_{k=0}^\infty (\gamma \, C)^k.
\end{align*}

\end{proof}

The above theorem states a sufficient condition for the magnitude of the gradient of the optimal Q-function to be upper-bounded. 
Note that $\sum_{k=0}^\infty (\gamma \, C)^k $ is a geometric series which is finite and equal to $\frac{1}{1 - \gamma C}$ only if $\gamma \, C < 1$. 
Since $\gamma < 1$ by definition, all we need is $C \leq 1$.

It is important to understand the intuition behind \eqref{ineq:required-inequality-2-theorem}.
Let $f_k(s_{t+j}) := \mathbb{E}^{*}_{s_{t+k} |s_{t+j}}  \left[ r(s_{t+k}) \right]$ for any $j \in \{0,1, \cdots, k \}$.

For the left hand side of \eqref{ineq:required-inequality-2-theorem} we have:
\begin{align}
    \label{lhs-introduce-f}
    \abs{ \nabla_{s_{t}}  \mathbb{E}^{*}_{s_{t+k} |s_{t}} \left[ r(s_{t+k}) \right] } 
    &= \abs{ \nabla_{s_{t}}   \mathbb{E}^{*}_{s_{t+1} |s_{t}} \underbrace{\mathbb{E}^{*}_{s_{t+k} |s_{t+1}}  \left[ r(s_{t+k}) \right]}_{f_k(s_{t+1})} } \nonumber \\
    &= \abs{ \nabla_{s_{t}} \mathbb{E}^{*}_{s_{t+1} |s_{t}}  \left[ f_k(s_{t+1}) \right]},
\end{align}
and for the right hand side of \eqref{ineq:required-inequality-2-theorem} we have:
\begin{align}
    \label{rhs-introduce-f}
    C\,  \mathbb{E}^{*}_{s_{t+1} | s_{t}} \abs{  \nabla_{s_{t+1}} \mathbb{E}^{*}_{s_{t+k} | s_{t+1}} \left[  r(s_{t+k})   \right]}
    &=  C\,  \mathbb{E}^{*}_{s_{t+1} | s_{t}} \abs{  \nabla_{s_{t+1}} f_k(s_{t+1})}
\end{align}
Hence, using \eqref{lhs-introduce-f} and \eqref{rhs-introduce-f}, inequality \eqref{ineq:required-inequality-2-theorem} reduces to:
\begin{align} \label{ineq:simplified-condition-stoch}
    \abs{ \nabla_{s_{t}} \mathbb{E}^{*}_{s_{t+1} |s_{t}}  \left[ f_k(s_{t+1}) \right]} \leq C\,  \mathbb{E}^{*}_{s_{t+1} | s_{t}} \abs{  \nabla_{s_{t+1}} f_k(s_{t+1})}
\end{align}

To understand this inequality better, we first consider the special case where the transition dynamics of the MDP are deterministic.
In that case, assuming a greedy optimal policy, the transition dynamics of the optimal Markov chain will be deterministic as well, i.e., $s_{t+1} = D(s_t)$ where $D(s): S \to S$ is the function specifying the deterministic dynamics of the optimal Markov Chain. 
For the left hand side of \eqref{ineq:simplified-condition-stoch} we have:
\begin{align*} 
    \abs{ \nabla_{s_{t}} \mathbb{E}^{*}_{s_{t+1} |s_{t}}  \left[ f_k(s_{t+1}) \right]} 
    &= \abs{ \nabla_{s_{t}}   f_k(s_{t+1}) } \tag{Due to deterministic transitions}  \\
    &= \abs{ \nabla_{s_{t}}s_{t+1} \times  \nabla_{s_{t+1}}  f_k(s_{t+1}) } \tag{Chain rule} \\
    &= \abs{ \nabla_{s_{t}}D(s_t) \times  \nabla_{D(s_t)}  f_k(D(s_t)) } \tag{Definition of $D(s_t)$} 
\end{align*}
For the right hand side of \eqref{ineq:simplified-condition-stoch} we have:
\begin{align*} 
    C \, \mathbb{E}^{*}_{s_{t+1} | s_{t}} \abs{  \nabla_{s_{t+1}} f_k(s_{t+1})}
    &= C \, \abs{  \nabla_{D(s_t)} f_k(D(s_t))} \tag{Definition of $D(s_t)$}
\end{align*}
Putting the left-hand side and the right-hand side together for the case of the deterministic dynamic, we get;
\begin{align} \label{ineq:simplified-condition-det}
    \abs{ \nabla_{s_{t}}D(s_t) \times \nabla_{D(s_t)}  f_k(D(s_t)) } \leq C \, \abs{  \nabla_{D(s_t)} f_k(D(s_t))} 
\end{align}
And for the above inequality to hold, it is sufficient to have $\abs{ \nabla_{s_t} D(s_t) } \leq C$, which  holds if the dynamics of the optimal Markov chain is $C$-Lipschitz. 

Essentially, inequality \eqref{ineq:required-inequality-2-theorem} and it's equivalent form \eqref{ineq:simplified-condition-stoch}, are a version of 
\eqref{ineq:simplified-condition-det} where the transition dynamics are stochastic rather than deterministic.
In light of the insights from \eqref{ineq:simplified-condition-det}, inequalities \eqref{ineq:required-inequality-2-theorem} and \eqref{ineq:simplified-condition-stoch} will hold if the stochastic dynamics of the MDP and the corresponding stochastic dynamics of the optimal Markov chain have a property that resembles Lipschitzness of a deterministic function.

\subsection{Lipschitzness of Optimal Q-function, Multi-Dimensional State Space}
\label{app-subsec:theory-multi}

The analysis in the previous subsection straightforwardly extends to multi-dimensional state space, but for the sake of completeness, we dedicate this subsection to the case of multi-dimensional state space.

For the case of multi-dimensional state space, we define $\nabla_{s_t^i}$ as the gradient operator with respect to the $i^{th}$ dimension of the state space. 
While we can use any $L_p$ norm in our derivations to measure variations in the state space, we choose to write the derivations based on $L_2$ norm.
We assume the reward function is $L$-Lipschitz continuous which mean $\norm{\nabla_{s_t} r(s_t) }_2 < L, \forall s\in S$. 
From the $L$-Lipschitzness of the reward function we can conclude that $\norm{\nabla_{s_{t}^i} r(s_t) }_2 < L, \forall s\in S$.

To show the Lipschitz continuity of the optimal Q-function for the general case of multi-dimensional state space, we need to show that the $L_2$ norm of the gradient of the optimal Q-function is bounded. 
Recall $\nabla_{s_t} Q^*(s_t, a_t)$ denotes the gradient of the optimal Q-function at time step $t$ with respect to the state $s_t$, i.e.,
\begin{equation*}
\begin{aligned}
\nabla_{s_t} Q^*(s_t, a_t)  = [ \nabla_{s_t^i} Q^*(s_t, a_t)  ]_{i=1}^{N}.
\end{aligned}
\end{equation*}
where  $N$ is the dimensionality of the state space.
Then, it holds that
\begin{equation}
\label{eq:jacob-q2-supp}
\begin{aligned}
\nabla_{s_t^i} Q^*(s_t, a_t)  &= \nabla_{s_t^i} \sum_{k=0}^\infty \gamma^k \, \mathbb{E}^*_{s_{t+k} |s_t,a_t} \left[ r(s_{t+k}) \right] = \sum_{k=0}^\infty \gamma^k \, \nabla_{s_t^i}  \mathbb{E}^*_{s_{t+k} |s_t,a_t} \left[ r(s_{t+k}) \right],
\end{aligned}
\end{equation}

The individual terms of the above sum, i.e., $\gamma^k \nabla_{s_t^i}  \mathbb{E}^{*}_{s_{t+k} |s_t,a_t} \left[ r(s_{t+k}) \right]$, measure the change in the expectation of the discounted reward at a future time step, $t+k$, given an infinitesimal perturbation in the $i^{th}$ dimension of the current state $s_t$.
In order to upper bound the norm of the gradient of $Q^{*}$, 
we need to upper bound the individual terms involved in \eqref{eq:jacob-q2-supp}, i.e., $\gamma^k  \nabla_{s_t^i}  \mathbb{E}^{*}_{s_{t+k} |s_t,a_t} \left[ r(s_{t+k}) \right]$, and show that their sum is  bounded. 
To this end, we state and prove the following proposition as an intermediate step.

\begin{proposition} \label{prop:second-inequality-multi}
    For any $i \in \{ 1, 2, \cdots, N\}$, if the following inequalities hold for all $t$ and all $k \in \{0,1,\cdots \}$,
    \begin{equation}
    \label{ineq:required-inequality-2-multi}
    \begin{aligned}
        \abs{ \nabla_{s_{t}^i}  \mathbb{E}^{*}_{s_{t+k} |s_{t}} \left[ r(s_{t+k}) \right] } 
        &\leq  C\,  \mathbb{E}^{*}_{s_{t+1} | s_{t}} \abs{  \nabla_{s_{t+1}^i} \mathbb{E}^{*}_{s_{t+k} | s_{t+1}} \left[  r(s_{t+k})   \right]},
    \end{aligned}
    \end{equation}
    then it holds that:
    \begin{equation} 
    \begin{aligned}
        \forall k, ~ \abs{   \nabla_{s_t^i}  \mathbb{E}^{*}_{s_{t+k} |s_t,a_t} \left[ r(s_{t+k}) \right] } \leq C^k . L,
    \end{aligned}
    \end{equation}    
\end{proposition}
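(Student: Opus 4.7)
The plan is to mimic the argument used for Proposition \ref{prop:first-prop} in the one-dimensional case and apply it coordinate-wise. Fix an index $i \in \{1,\dots,N\}$ and proceed by induction on $k$ (equivalently, by unrolling the recursion in \eqref{ineq:required-inequality-2-multi}). The base case $k=0$ is trivial: since $\mathbb{E}^{*}_{s_{t}|s_t,a_t}[r(s_t)] = r(s_t)$, the $L$-Lipschitzness of $r$ gives $|\nabla_{s_t^i} r(s_t)| \le \|\nabla_{s_t} r(s_t)\|_2 \le L = C^0 L$.

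For the inductive step (or unrolling), I would start from $\abs{\nabla_{s_t^i}\mathbb{E}^{*}_{s_{t+k}|s_t,a_t}[r(s_{t+k})]}$ and first observe that, once the initial action $a_t$ is fixed and the agent follows the optimal policy thereafter, the conditional distribution of $s_{t+k}$ given $(s_t, a_t)$ is governed by $s_{t+1}\sim T(s_t,a_t)$ followed by the optimal-policy transitions, so one can replace $\mathbb{E}^{*}_{s_{t+k}|s_t,a_t}$ by $\mathbb{E}^{*}_{s_{t+k}|s_t}$ in the inequality (this is the standard reduction used in the proposition's hypothesis, which is stated with the $s_t$-conditional). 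Then apply \eqref{ineq:required-inequality-2-multi} once to obtain
\begin{equation*}
\abs{\nabla_{s_t^i}\mathbb{E}^{*}_{s_{t+k}|s_t}[r(s_{t+k})]}
\le C\,\mathbb{E}^{*}_{s_{t+1}|s_t}\abs{\nabla_{s_{t+1}^i}\mathbb{E}^{*}_{s_{t+k}|s_{t+1}}[r(s_{t+k})]}.
\end{equation*}
Now apply \eqref{ineq:required-inequality-2-multi} again to the inner absolute gradient (time-shifted from $t$ to $t+1$), producing another factor of $C$ and an additional expectation, and use the tower property $\mathbb{E}^{*}_{s_{t+1}|s_t}\mathbb{E}^{*}_{s_{t+2}|s_{t+1}} = \mathbb{E}^{*}_{s_{t+2}|s_t}$ under the optimal policy. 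Iterating $k$ times yields
\begin{equation*}
\abs{\nabla_{s_t^i}\mathbb{E}^{*}_{s_{t+k}|s_t}[r(s_{t+k})]}
\le C^k\,\mathbb{E}^{*}_{s_{t+k}|s_t}\abs{\nabla_{s_{t+k}^i} r(s_{t+k})}.
\end{equation*}
Finally, bound the coordinate-wise gradient by the full gradient norm, $|\nabla_{s_{t+k}^i} r(s_{t+k})| \le \|\nabla_{s_{t+k}} r(s_{t+k})\|_2 \le L$, and take the expectation to obtain $C^k L$, completing the proof.

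The argument is essentially a direct transcription of Proposition \ref{prop:first-prop} with $\nabla_{s_t}$ replaced by $\nabla_{s_t^i}$, so I do not expect any substantive obstacle. The only point requiring a line of care is the justification for passing from $\mathbb{E}^{*}_{s_{t+k}|s_t,a_t}$ in the statement to be proved to $\mathbb{E}^{*}_{s_{t+k}|s_t}$ in the hypothesis \eqref{ineq:required-inequality-2-multi}, which uses the Markov property together with the fact that after time $t$ the trajectory follows the optimal policy; and the observation that the tower property keeps the iterated expectations consistent so that the factor $C^k$ accumulates cleanly without extra residual terms.
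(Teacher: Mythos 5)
Your proposal is correct and follows essentially the same route as the paper's proof: unroll the hypothesis $k$ times, collapse the nested conditional expectations via the tower property under the optimal policy, and bound the final term $\mathbb{E}^{*}_{s_{t+k}|s_t}\abs{\nabla_{s_{t+k}^i} r(s_{t+k})}$ by $L$ using the Lipschitzness of the reward. Your explicit handling of the base case $k=0$ and of the passage from the $(s_t,a_t)$-conditional expectation in the conclusion to the $s_t$-conditional one in the hypothesis is slightly more careful than the paper, which applies the inequality to the $(s_t,a_t)$-conditional expression without comment, but the argument is the same.
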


\begin{proof}
Starting from \eqref{ineq:required-inequality-2-multi}, we have:
\begin{align*} \label{ineq:required-inequality-2-expanded-multi}
    \abs{ \nabla_{s_t^i}  \mathbb{E}^{*}_{s_{t+k} |s_t,a_t} \left[ r(s_{t+k}) \right] } 
    & \leq  C\,  \mathbb{E}^{*}_{s_{t+1} | s_t} \abs{  \nabla_{s_{t+1}^i} \mathbb{E}^{*}_{s_{t+k} | s_{t+1}} \left[  r(s_{t+k})   \right]}   \\
    & \leq  C\,  \mathbb{E}^{*}_{s_{t+1} | s_t} C \mathbb{E}^{*}_{s_{t+2} | s_{t+1}} \abs{  \nabla_{s_{t+2}^i} \mathbb{E}^{*}_{s_{t+k} | s_{t+2}} \left[  r(s_{t+k})   \right]}  \tag{{From \eqref{ineq:required-inequality-2-multi}}} \\
    & = C^2 \,  \mathbb{E}^{*}_{s_{t+1} | s_t} \mathbb{E}^{*}_{s_{t+2} | s_{t+1}} \abs{  \nabla_{s_{t+2}^i} \mathbb{E}^{*}_{s_{t+k} | s_{t+2}} \left[  r(s_{t+k})   \right]} \\
    & = C^2 \,  \mathbb{E}^{*}_{s_{t+2} | s_t} \abs{  \nabla_{s_{t+2}^i} \mathbb{E}^{*}_{s_{t+k} | s_{t+2}} \left[  r(s_{t+k})   \right]} \\
    & \vdots  \tag{Repeated use of \eqref{ineq:required-inequality-2-multi}} \\
    &\leq C^k \,  \mathbb{E}^{*}_{s_{t+k} | s_t} \abs{  \nabla_{s_{t+k}^i} \mathbb{E}^{*}_{s_{t+k} | s_{t+k}} \left[  r(s_{t+k})   \right]} \\
    &\leq C^k \,  \mathbb{E}^{*}_{s_{t+k} | s_t} \abs{  \nabla_{s_{t+k}^i} \left[  r(s_{t+k})   \right]}
\end{align*}
but due to the $L$-Lipschitznes of the reward function, we have $\mathbb{E}^{*}_{s_{t+k} | s_t} \abs{  \nabla_{s_{t+k}^i} \left[  r(s_{t+k})   \right]} \leq L$. 
Consequently, we conclude
\begin{equation}
    \abs{ \nabla_{s_t^i}  \mathbb{E}^{*}_{s_{t+k} |s_t,a_t} \left[ r(s_{t+k}) \right] } \leq C^k \, L.
\end{equation}

\end{proof}

The following theorem provides a sufficient condition for upper bounding the norm of the gradient of the $Q^{*}$ which is required to prove the Lipschitzness of $Q^{*}$.

\begin{theorem} \label{theorem:multi}
If there exist constant $C$ where for each $i \in \{1, \cdots, N\}$
    \begin{equation} \label{ineq:required-inequality-2-multi-theorem}
    \begin{aligned}
        \abs{ \nabla_{s_{t}^i}  \mathbb{E}^{*}_{s_{t+k} |s_{t}} \left[ r(s_{t+k}) \right] } 
        &\leq  C\,  \mathbb{E}^{*}_{s_{t+1} | s_{t}} \abs{  \nabla_{s_{t+1}^i} \mathbb{E}^{*}_{s_{t+k} | s_{t+1}} \left[  r(s_{t+k})   \right]},
    \end{aligned}
    \end{equation}
then we can upper bound the $L_2$ norm of the gradient of the Q-function as follows:
\begin{equation}
\label{eq:norm-grad-q-upper-bound-by-c-multi}
\norm{\nabla_{s_t} Q^*(s_t, a_t)}_2 
\leq \sqrt{N} \,  L \sum_{k=0}^\infty (\gamma \, C)^k
\end{equation}
\end{theorem}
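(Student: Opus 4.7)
The plan is to bootstrap directly from Proposition~\ref{prop:second-inequality-multi}, which already handles the per-coordinate analysis, and then aggregate across the $N$ coordinates of the state. The key observation is that the bound we want is really just the coordinate-wise bound from the one-dimensional analysis, summed in quadrature, with a clean $\sqrt{N}$ factor appearing from $L_2$ aggregation.

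First I would fix an arbitrary coordinate index $i \in \{1,\dots,N\}$ and start from the exchange-of-sum-and-gradient identity \eqref{eq:jacob-q2}, which writes $\nabla_{s_t^i} Q^*(s_t,a_t)$ as the series $\sum_{k=0}^\infty \gamma^k \nabla_{s_t^i} \mathbb{E}^*_{s_{t+k}|s_t,a_t}[r(s_{t+k})]$. Applying the triangle inequality term-by-term and then invoking Proposition~\ref{prop:second-inequality-multi}, which under hypothesis \eqref{ineq:required-inequality-2-multi-theorem} guarantees $|\nabla_{s_t^i}\mathbb{E}^*_{s_{t+k}|s_t,a_t}[r(s_{t+k})]| \leq C^k L$, I immediately get
\begin{equation*}
|\nabla_{s_t^i} Q^*(s_t,a_t)| \;\leq\; \sum_{k=0}^\infty \gamma^k C^k L \;=\; L \sum_{k=0}^\infty (\gamma C)^k,
\end{equation*}
uniformly in $i$.

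Next I would aggregate over the $N$ coordinates using the definition of the $L_2$ norm of the gradient vector from \eqref{eq:jacob-q-vector}:
\begin{equation*}
\|\nabla_{s_t} Q^*(s_t,a_t)\|_2 \;=\; \sqrt{\sum_{i=1}^N \bigl(\nabla_{s_t^i} Q^*(s_t,a_t)\bigr)^2}.
\end{equation*}
Substituting the per-coordinate bound into each term under the square root produces $N$ identical summands, each equal to $\bigl(L \sum_{k=0}^\infty (\gamma C)^k\bigr)^2$, and pulling the square root out of the resulting $N$ yields the claimed bound $\sqrt{N}\, L \sum_{k=0}^\infty (\gamma C)^k$.

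There is essentially no hard step: Proposition~\ref{prop:second-inequality-multi} is doing all the work, since it handles the recursive unrolling of the hypothesis \eqref{ineq:required-inequality-2-multi-theorem} through the optimal Markov chain together with the $L$-Lipschitzness of the reward. The only mild subtlety is justifying the interchange of gradient and infinite sum in \eqref{eq:jacob-q2}, which is legitimate under the standing assumption that $r$ is differentiable and the discount factor $\gamma<1$ makes the series uniformly convergent (and hence termwise differentiable) whenever $\gamma C<1$; otherwise the bound is vacuously true since the right-hand side diverges. Beyond that, the proof is a direct coordinate-wise application of the triangle inequality followed by $L_2$ aggregation.
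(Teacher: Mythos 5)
Your proposal is correct and follows essentially the same route as the paper's proof: apply the triangle inequality termwise to the series in \eqref{eq:jacob-q2}, invoke Proposition~\ref{prop:second-inequality-multi} to bound each term by $\gamma^k C^k L$, sum the geometric series, and aggregate over the $N$ coordinates to produce the $\sqrt{N}$ factor (the paper merely carries the square of the norm through and takes a square root at the end, which is the same computation). Your remark on justifying the interchange of gradient and infinite sum is a point the paper leaves implicit, but it does not alter the argument.
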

\begin{proof}

    \begin{align*} \label{eq:norm-jacob-q}
    \norm{\nabla_{s_t} Q^*(s_t, a_t)}_2^2 &= 
    \sum_i^N \left( \nabla_{s_t^i} Q^*(s_t, a_t) \right)^2 \\
    &= \sum_i^N \left( \sum_{k=0}^\infty \gamma^k \, \nabla_{s_t^i}  \mathbb{E}^{*}_{s_{t+k} |s_t,a_t} \left[ r(s_{t+k}) \right] \right)^2 \\
    &\leq \sum_i^N \left( \sum_{k=0}^\infty \gamma^k \abs{  \nabla_{s_t^i}  \mathbb{E}^{*}_{s_{t+k} |s_t,a_t} \left[ r(s_{t+k}) \right] } \right)^2 \\
    &\leq \sum_i^N \left( \sum_{k=0}^\infty \gamma^k C^k . L \right)^2 ~~~~~~~ \tag*{ Using \eqref{ineq:required-inequality-2-multi-theorem}} \\
    &\leq \sum_i^N \left( \sum_{k=0}^\infty (\gamma C)^k . L \right)^2 \\
    &=  N \left( L \sum_{k=0}^\infty (\gamma C)^k  \right)^2 
    \end{align*}
    By taking the square root of both sides, we get:
    \begin{align*}
        \norm{\nabla_{s_t} Q^*(s_t, a_t)}_2 
        \leq \sqrt{N} \,  L \sum_{k=0}^\infty (\gamma \, C)^k
    \end{align*}

\end{proof}

The above theorem states a sufficient condition for the magnitude of the gradient of the optimal Q-function with respect to the states to be upper bounded. 
Note that $\sum_{k=0}^\infty (\gamma \, C)^k $ is a geometric series which is finite and equal to $\frac{1}{1 - \gamma C}$ only if $\gamma \, C < 1$. 
Since $\gamma < 1$ by definition, all we need is $C \leq 1$.

For the intuitions behind the condition \eqref{ineq:required-inequality-2-multi-theorem} of the above Theorem, refer to the previous subsection on one-dimensional state spaces. 
The same argument extends to multi-dimensional state spaces.

\clearpage
\section{Additional Experiments}
\label{app-subsec:experiments}
Here is a summary of some of the most important hyper-parameters we use in training LL-GAIL: Learning rate for discriminator and generator updates: $3.0e-4$. Discount factor for reward: $\gamma=0.99$. Total number of environment steps for training the generator through PPO: $15,000,000$. Number of PPO epochs: $10$. We use linear decay for training the PPO algorithm. Entropy coefficient of $0$ for PPO algorithm. Number of projected gradient ascent steps to compute $\delta_s$ and $\delta_{s,a}$ through \eqref{opt:adv-pert-disc}, \eqref{opt:adv-pert-gen} in the main text: 10 steps. PPO clipping parameter: $0.2$. We apply gradient penalty to all methods and tune their parameters by performing a grid search. For all cases of gradient penalty, we report the results for a $\kappa$ value of 10.

Except for the hyper-parameters that are specific to LL-GAIL, the two baselines use identical hyper-parameters as LL-GAIL.

The neural network modeling the discriminator is a fully connected network. 
It takes as input the observation and action and outputs a real number in the range $(0,1)$. 
The network consists of two hidden layers of size 100 each followed by a $tanh$ layer. 
The output layer is one-dimensional and is followed by a Sigmoid function to produce a real number in the range of $(0,1)$. 
The network modeling the generator (policy) is a fully connected network with 3 hidden layers of size 64, each followed by a $tanh$ layer. 
The output layer takes the output of the last hidden layer and maps it linearly into the parameters of a Gaussian distribution from which actions can be sampled.

We perform two sets of experiments, one set uses the $L_2$ norm and the other uses the $L_\infty$ norm throughout the experiments.
The norms are used for the following: 1) Defining the balls in which we find the adversarial perturbations $\delta_s$ and $\delta_{s,a}$ through \eqref{opt:adv-pert-disc}, \eqref{opt:adv-pert-gen} in the main text. 2) Defining the ball from which we sample the noise injected at test time. 3) Computing the empirical local-Lipschitz constant (ELLC) of the generator $\pi_\theta$ at a given radius $r_p$ which is calculated as: 
\[ \text{ELLC}(\pi_\theta, r_p)=  \mathbb{E}_{s, \norm{\delta} = r_p} \left[D_J(\pi_\theta(s)||\pi_\theta(s+\delta) / r_p \right] \]
in which $\norm{\delta}$ can be measured with respect to the $L_2$ norm or the $L_\infty$ norm.

The experiments mentioned in the main text correspond to the case where we use the $L_2$ norm for all the above quantities. 
In this section in the appendix, however, we discuss the results when we use the $L_\infty$ instead.

For this set of experiments, we use the same environments as the main text, i.e., simulated robot locomotion environments in the MuJoCo suite \cite{todorov2012mujoco}, namely Walker2d, Hopper, and HalfCheetah.

\textbf{Regularizing the Discriminator.}
First, we investigate the effect of inducing local Lipschitzness in the discriminator of LL-GAIL using $L_\infty$ norm
(see Figures~\ref{subplot:Walker2d-d-L-inf}, \ref{subplot:Hopper-d-L-inf},  \ref{subplot:HalfCheetah-d-L-inf}). 
We refer to this method as LLD-GAIL.
Using Algorithm.~\ref{alg:lip-disc}, we train LLD-GAIL on a range of hyper-parameters $\gamma$ and $r_p$.

Figure~\ref{subplot:Walker2d-d-L-inf} benchmarks the discriminator-regularized LL-GAIL (LLD-GAIL) against the baselines for the Walker2d environment. 
The plot on the left-hand side of Figure~\ref{subplot:Walker2d-d-L-inf} compares the performance of the generators learned by different models in test scenarios where the observations are corrupted by different levels of noise. 
To simulate the effect of noise on observations, we add noise within an $L_\infty$ norm equal to the noise level reported on the plots. 

To understand the link between the local Lipschitzness of the generators and their robustness, the plot on the right-hand side of Figure~\ref{subplot:Walker2d-d-L-inf} displays the empirical local Lipschitzness constant (ELLC) of the trained generators. 
The ELLC is an empirical metric that quantifies the local Lipschitzness properties of a trained generator. 
A generator with a higher EELC is more sensitive to perturbations in the observations.

To train the noisy GAIL baseline, we perturb the observations fed to the discriminator with random noise within an $L_\infty$ ball where the radius of the ball specifies the noise level. 
We train the noisy GAIL with a range of noise levels but we only report the results corresponding to the best training noise level ($0.03$ noise measured in $ L_\infty$ for all cases ). 

Figure~\ref{subplot:Walker2d-d-L-inf} demonstrates that the proposed discriminator-regularization helps LLD-GAIL outperform the baselines by improving the Lipschitzness properties of the generator and in turn those of the imitation policies. 
Figure~\ref{subplot:Walker2d-d-L-inf} further shows that the generators that have a smaller ELLC at a given radius, perform better at a noise level comparable to that radius than generators that have a larger ELLC. The same pattern is observed in Figure.~\ref{subplot:Hopper-d}.
However, we did not see improved robustness for the generator learned through LLD-GAIL or the noisy baseline when using $L_\infty$ norm to regularize the discriminator (Figure. \ref{subplot:HalfCheetah-d-L-inf}).

\textbf{Regularizing the Generator.}
Next, we focus on the effect of directly inducing local Lipschitzness using $L_\infty$ norm in the generator of LL-GAIL on the robustness of the generator (see Figures.~\ref{subplot:Walker2d-g-L-inf}, \ref{subplot:Hopper-g-L-inf},  \ref{subplot:HalfCheetah-g-L-inf}). 
We refer to this generator-regularized LL-GAIL method as LLG-GAIL.
We train the LLG-GAIL according to Algorithm.~\ref{alg:lip-gen} with $L_\infty$ norm. 

When training the noisy GAIL baseline for this set of experiments, we add randomly sampled noise an an $L_\infty$ norm equal to the reported noise level on the plots. 
We report the best results we obtained for the noisy GAIL baseline ($0.3$ noise measured in $ L_\infty$ for all cases ). 

Figure~\ref{subplot:Walker2d-g-L-inf} shows that LLG-GAIL with a regularized generator significantly outperforms both natural GAIL and noisy GAIL across a wide range of noise levels. 
Comparing the left-hand plot and the right-hand plot of Figure~\ref{subplot:Walker2d-g-L-inf}, we observe that the generators with smaller ELLCs vastly outperform those with larger ELLCs, especially at larger noise levels. 
A similar pattern is observed in Figures.~\ref{subplot:Hopper-g-L-inf}, \ref{subplot:HalfCheetah-g-L-inf} which correspond to the Hopper and HalfCheetah environments respectively.

\begin{figure}
\centering
\subfigure[\footnotesize Walker2d, reg.  disc.]{\label{subplot:Walker2d-d-L-inf} \includegraphics[width=0.32\textwidth]{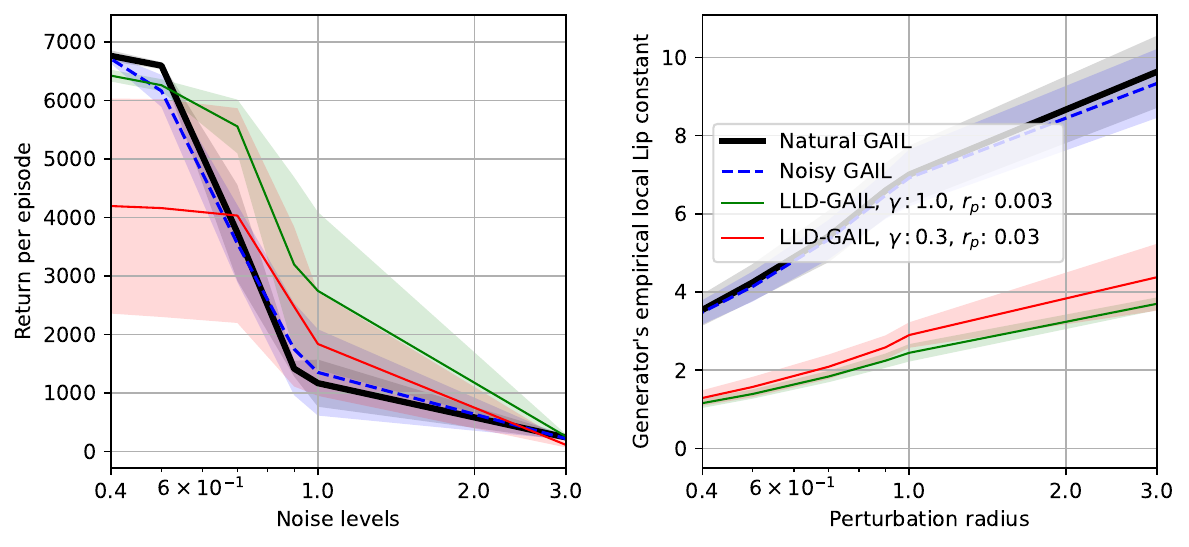}}
\hfill
\subfigure[\footnotesize Walker2d, reg. gen.]{\label{subplot:Walker2d-g-L-inf} \includegraphics[width=0.32\textwidth]{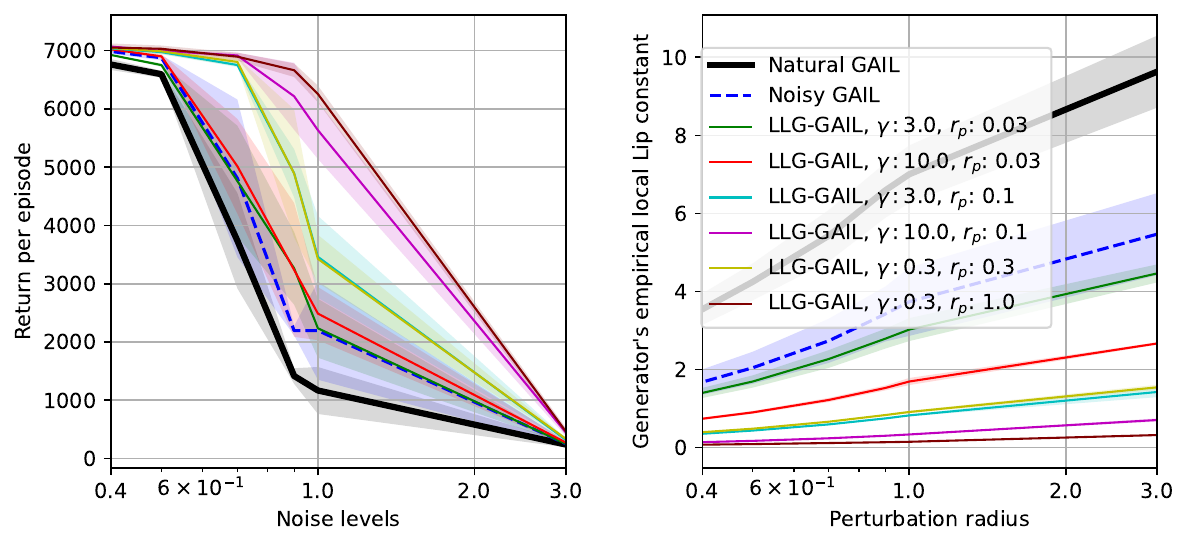}}%
\hfill
\subfigure[\footnotesize Hopper, reg. disc.]{\label{subplot:Hopper-d-L-inf} \includegraphics[width=0.32\textwidth]{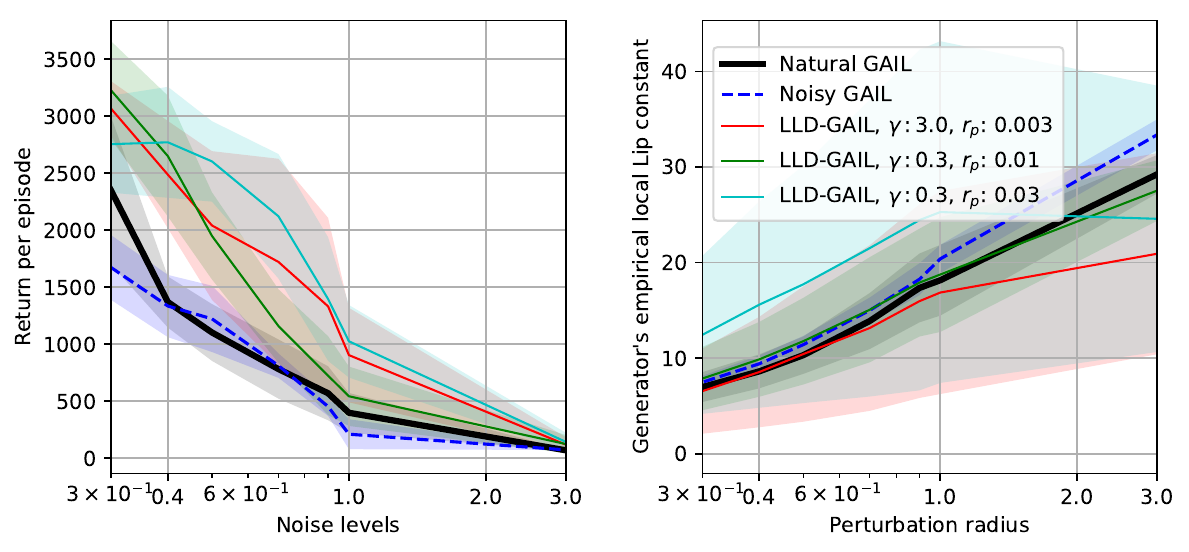}}%
\hfill
\subfigure[\footnotesize Hopper, reg. gen.]{\label{subplot:Hopper-g-L-inf} \includegraphics[width=0.32\textwidth]{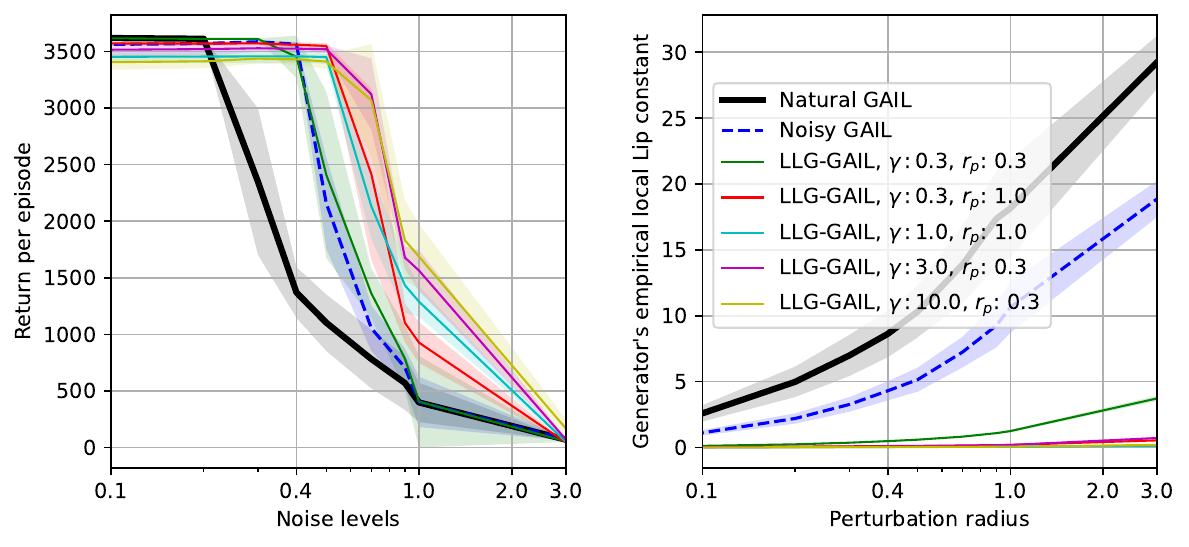}}%
\hfill
\subfigure[\footnotesize HalfCheetah, reg. disc.]{\label{subplot:HalfCheetah-d-L-inf}\includegraphics[width=0.32\textwidth]{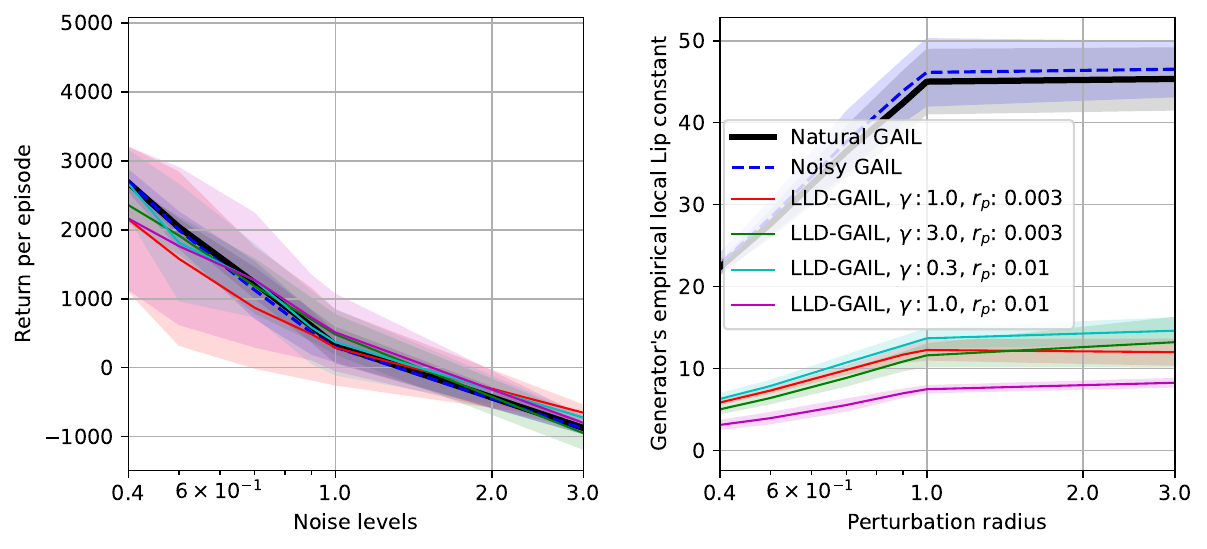}}%
\hfill
\subfigure[\footnotesize HalfCheetah, reg. gen.]{\label{subplot:HalfCheetah-g-L-inf} \includegraphics[width=0.32\textwidth]{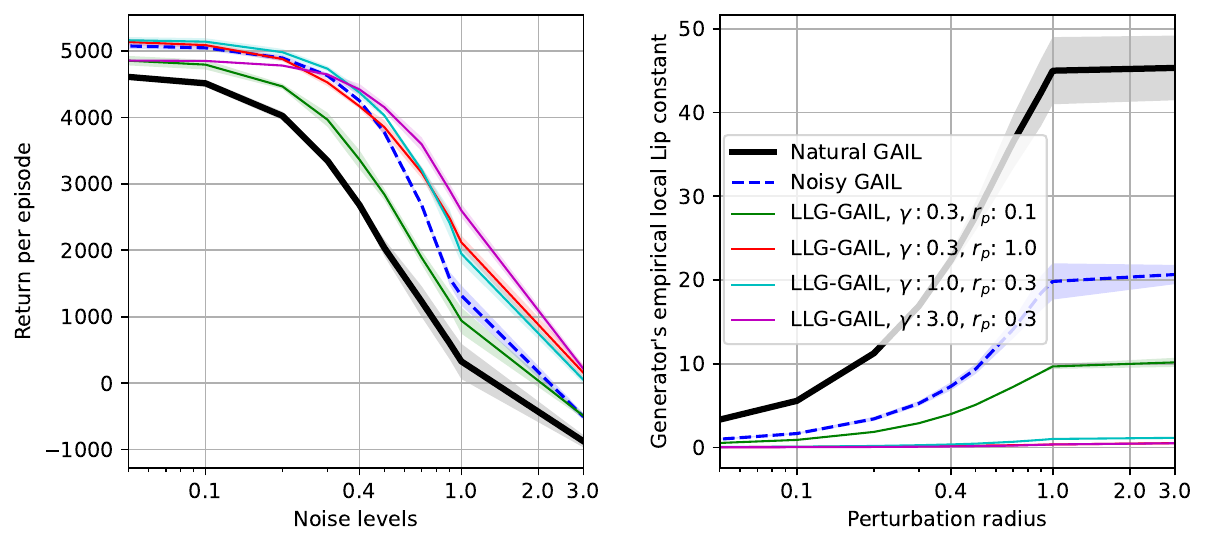}}%
\caption{\footnotesize The comparison between LL-GAIL and the benchmarking schemes natural GAIL and noisy GAIL on several simulated robot locomotion environments in the MuJoCo suite \cite{todorov2012mujoco} when using $L_\infty$ norm.}
\label{fig:LL-GAIL-L-inf}
\end{figure}

\begin{figure}
\centering
\subfigure[\footnotesize Walker2d]{\label{subplot:Walker2d-best} \includegraphics[width=0.24\textwidth]{pics/Walker2d-compare-the-best-L_2.pdf}}
\hfill
\subfigure[\footnotesize Hopper]{\label{subplot:Hopper-best} \includegraphics[width=0.24\textwidth]{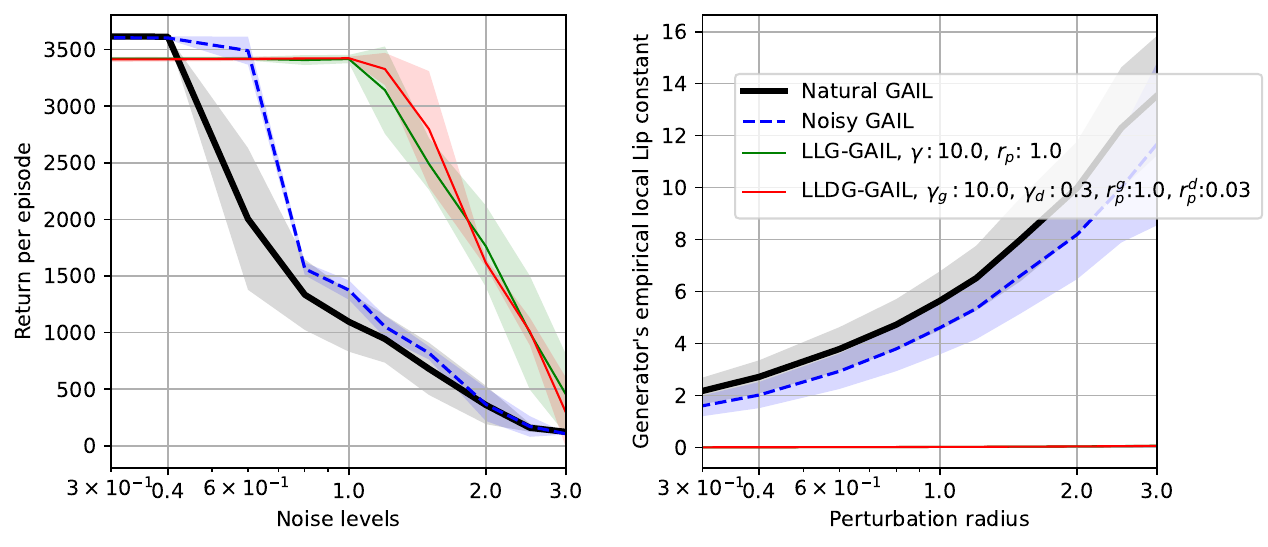}}%
\hfill
\subfigure[\footnotesize HalfCheetah]{\label{subplot:HalfCheetah-best}\includegraphics[width=0.24\textwidth]{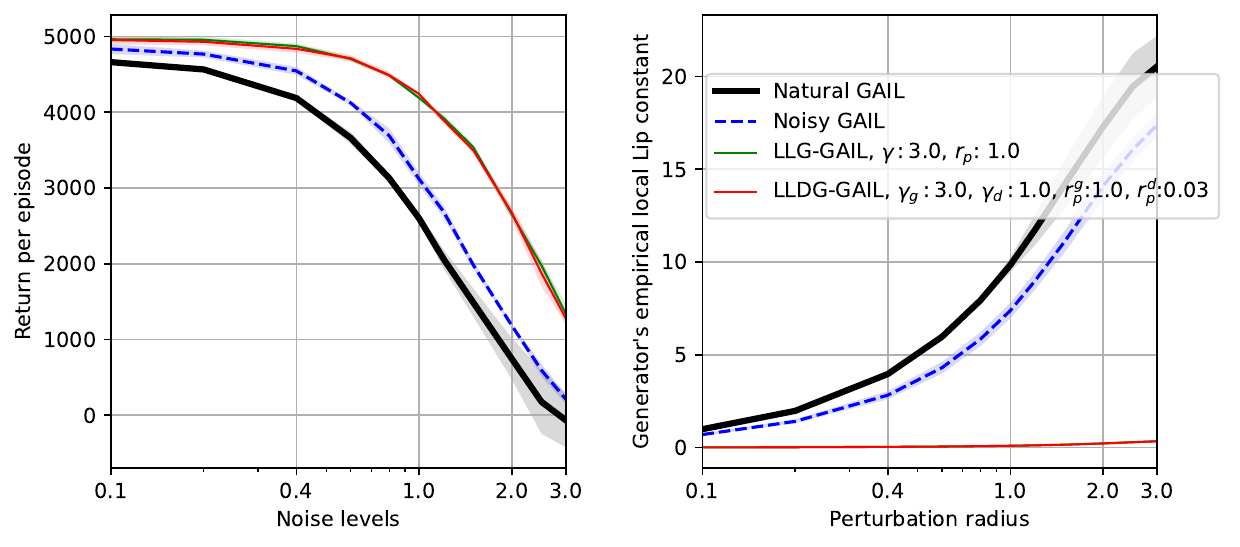}}%
\hfill
\subfigure[\footnotesize Ant]{\label{subplot:Ant-best} \includegraphics[width=0.24\textwidth]{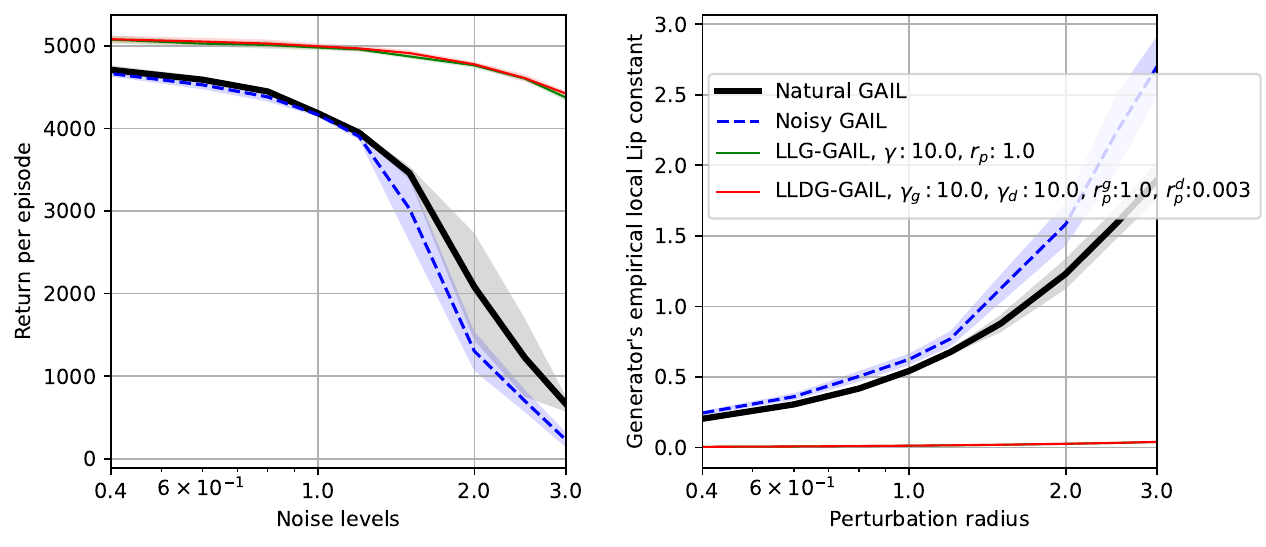}}%
\caption{\footnotesize LLDG-GAIL with both generator and discriminator regularized compared with LLG-GAIL with only the generator regularized and noisy GAIL and natural GAIL. $\gamma_g$ and $r^g_p$ are the hyper-parameters for regularizing the generator of LLDG-GAIL, and $\gamma_d$ and $r^d_p$ are the hyper-parameters for regularizing the discriminator of LLDG-GAIL.}
\label{fig:best-of-all}
\end{figure}

    


\textbf{Regularizing both the generator and discriminator}
Moreover, we investigate the effect of regularizing both the generator and discriminator (Figure.~\ref{fig:best-of-all}) using $L_2$ norm. 
We refer to this method as LLDG-GAIL.
In this set of experiments, we compare the LLDG-GAIL, to LLG-GAIL (which is generator-regularized LL-GAIL), noisy GAIL, and natural GAIL. 
In the parameters $\gamma_g, \gamma_d, r^g_p, r^d_p$ mentioned for LLDG-GAIL, we have used superscripts or subscripts $d$ and $g$ to refer to the hyper-parameters of discriminator regularization and generator regularization respectively. 
Due to the limitations of computational resources, we do not do an exhaustive hyper-parameter tuning for LLDG-GAIL. 
The grid search over the range of hyper-parameters for this case is beyond the computational resources available to us. 

Figure.~\ref{fig:best-of-all} shows that we attain improved robustness of the policy when regularizing both the generator and discriminator (LLDG-GAIL) and this effect is more pronounced for the Walker2d and Hopper environment. However, we observe diminishing returns compared to only regularizing the generator (LLG-GAIL).  
We identify two possible explanations for this phenomenon. First, both of the proposed regularizers aim to promote robustness through inducing local Lipschitzness, and given this common goal, observing a saturated improvement is expected. Additionally, the absence of hyperparameter tuning contributes to such a phenomenon. 


\end{document}